\documentclass[journal]{IEEEtran}

%%%%%%%%%%%%%%%%%%%%%%%%%%%%%%%%%%%%%%%%%%%%%%%%%%%%%%%%%%%%%%%%%%%%%%%%%%%%%%%%
%2345678901234567890123456789012345678901234567890123456789012345678901234567890
%        1         2         3         4         5         6         7         8

% \usepackage{pdfsync}
% \usepackage[subrefformat=parens,labelformat=parens]{subfig}
\usepackage{graphicx}
  \graphicspath{{Figures/}}
  \DeclareGraphicsExtensions{.pdf,.jpg,.png,.eps}
\usepackage{amsmath,amssymb}
% \usepackage{algorithm}
% \usepackage{algorithmicx}
% \usepackage{algpseudocode}% http://ctan.org/pkg/algorithmicx
% %\usepackage{cite}
\usepackage{xspace}
\usepackage{booktabs}
\usepackage{amsthm}
\usepackage{wrapfig}
\usepackage{dsfont}
\usepackage{hyperref}
\usepackage{arydshln}

%% references
\newcommand{\secref}[1]{Section~\ref{#1}}
\newcommand{\secrefb}[1]{(Section~\ref{#1})}
\renewcommand{\eqref}[1]{(\ref{#1})}
\newcommand{\eqrefb}[1]{(Eqn. \ref{#1})}
\newcommand{\figref}[1]{Fig.~\ref{#1}}
\newcommand{\figrefb}[2]{(Fig.~\ref{#1}#2)}

%% abbreviations
\newcommand{\ie}{\textit{i.e.}}
\newcommand{\eg}{\textit{e.g.}}
\newcommand{\etc}{\textit{etc.}}

%% Revision
\newcommand{\rvs}[1]{{\color{black}#1}}
% \newcommand{\rvs}[1]{{\color{blue}#1}}

%% math
\newtheorem{theorem}{Theorem}

\DeclareMathOperator*{\argmax}{arg\,max} % Jan Hlavacek
 % Jan Hlavacek
\newcommand{\expect}{\mathbb{E}} % Jan Hlavacek

%% citation

\newlength{\citeskipup}
\newlength{\citeskipdown}

%% comment
\usepackage{color}
% \definecolor{darkgreen}{rgb}{0,0.5,0} 
\definecolor{fullred}{rgb}{0.95,.0,.1} 
% \definecolor{brown}{rgb}{0.65,0.16,0.16} 
% \definecolor{orange}{rgb}{1,0.5,0} 
\setlength{\marginparwidth}{0.5in}
\newcounter{cmt}

%% make a thick line in tables using double rules
%\setlength{\doublerulesep}{\arrayrulewidth}

%% Alignments
\newcommand{\Fbox}[1]{\setlength{\fboxrule}{1pt}\setlength{\fboxsep}{0pt}\fbox{#1}}

\newcommand{\scenario}{\ensuremath{{\boldsymbol{\phi}}}\xspace}

\newcommand{\scenariosetnode}[1]{\ensuremath{{\boldsymbol{\Phi}_{#1}}}\xspace}

\newcommand{\randnum}{\ensuremath{\varphi}\xspace}

\newcommand{\obs}{\ensuremath{z}\xspace}
\newcommand{\act}{\ensuremath{a}\xspace}

\newcommand{\optact}{\ensuremath{a^*}\xspace}

\newcommand{\rfun}[2]{\ensuremath{R({#1},{#2})}\xspace}

\newcommand{\obsset}{\ensuremath{Z}\xspace}

\newcommand{\sinit}{\ensuremath{s_0}\xspace}

\newcommand{\node}{\ensuremath{b}\xspace}

\newcommand{\newnode}{\ensuremath{b'}\xspace}

\newcommand{\rootnode}{\ensuremath{b_0}\xspace}

\newcommand{\polvalue}{\ensuremath{V}\xspace}

\newcommand{\nvisit}[1]{\ensuremath{N({#1})}\xspace}
\newcommand{\nvisita}[2]{\ensuremath{N({#1},{#2})}\xspace}
\newcommand{\ubsymbol}{\ensuremath{u}\xspace}
\newcommand{\ub}[1]{\ensuremath{\ubsymbol({#1})}\xspace}
\newcommand{\uba}[2]{\ensuremath{\ubsymbol({#1},{#2})}\xspace}

\newcommand{\lbsymbol}{\ensuremath{l}\xspace}
\newcommand{\lb}[1]{\ensuremath{\lbsymbol({#1})}\xspace}

\newcommand{\gapsymbol}{\ensuremath{\epsilon}\xspace}
\newcommand{\regretsymbol}{\ensuremath{\hat{\epsilon}}\xspace}

\newcommand{\optdespotpolicy}{\ensuremath{\hat{\pi}^*}\xspace}
\newcommand{\optpolicy}{\ensuremath{\pi^*}\xspace}

\newcommand{\algname}{LeTS-Drive\xspace}

\newcommand{\task}{crowd-driving\xspace}

\newcommand{\olimitation}{Open-SSL\xspace}
\newcommand{\climitation}{Closed-SSL\xspace}
\newcommand{\clreinforce}{Closed-RL\xspace}
\newcommand{\sac}{PG\xspace}

%\documentclass[a4paper, 10pt, conference]{ieeeconf}      % Use this line for a4 paper

%\IEEEoverridecommandlockouts                              % This command is only needed if 
                                                          % you want to use the \thanks command

%\overrideIEEEmargins                                      % Needed to meet printer requirements.

% \setlength{\belowcaptionskip}{-10pt}

\usepackage{fancyhdr}

\fancypagestyle{firstpage}{%
  \lhead{Under revision with IEEE Transactions on Robotics}
}

\begin{document}

\title{\huge \textbf{Closing the Planning-Learning Loop with \\ Application to Autonomous Driving}}

\author{
\IEEEauthorblockN{
Panpan Cai and
David Hsu,~\IEEEmembership{Fellow,~IEEE}}

\IEEEauthorblockA{School of Computing, National University of Singapore, 117417 Singapore}
}

%===============================================================================
\IEEEtitleabstractindextext{%

\begin{abstract}

Real-time planning under uncertainty is critical for robots operating in complex dynamic environments. 
Consider, for example, an autonomous robot vehicle driving in dense,  unregulated urban traffic of cars, motorcycles, buses, \etc.
The robot vehicle has to plan in both short and long terms, in order  
to interact with many  traffic participants of uncertain intentions and drive effectively. 
% \rvs{To tackle an interactive environment with uncertain human behaviors, }the robot vehicle has to plan in both short and long terms in order to drive effectively and approach human-level performance.
Planning explicitly over a long time horizon, however, incurs prohibitive computational cost and is impractical under real-time constraints. To achieve real-time performance for large-scale planning,  this work  introduces a new algorithm \textit{Learning from Tree Search for Driving} (\algname), which integrates planning and learning in a closed loop, and applies it to autonomous driving in crowded urban traffic in simulation. 
Specifically, \algname learns a policy and its value function from \rvs{data provided by} an online planner, which searches a sparsely-sampled belief tree; the online planner in turn uses the learned policy and value functions  as heuristics to scale up its run-time performance for real-time robot control.
These two steps are repeated to form a closed loop so that the planner and the learner inform each other and improve in synchrony.
% \algname learns a driving policy from a planner, which is based on sparsely-sampled tree search. The learned policy in turn guides online planning for real-time vehicle control.
  The algorithm learns on its own in a self-supervised manner, without human effort on explicit data labeling. 
%   We applied \algname to autonomous driving in crowded urban traffic in simulation. 
  Experimental results  demonstrate that \algname outperforms either planning or learning alone, as well as open-loop integration of planning and learning. 

\end{abstract}

\begin{IEEEkeywords}
Planning under uncertainty, Robot learning, Autonomous driving
\end{IEEEkeywords}}
%===============================================================================
\maketitle
\thispagestyle{firstpage}

\IEEEdisplaynontitleabstractindextext
\IEEEpeerreviewmaketitle

\section{Introduction} \label{sec:introduction}

%% Challenges: scalability of planning in highly dynamic interactive environment
As robots move closer to our daily lives in offices, homes, or on the road, a major challenge is tackling complex, highly dynamic, and interactive environments in real time.
One example is \textit{\task}: an autonomous vehicle drives through crowded roads and uncontrolled intersections, with heterogeneous traffic flows of cars, motorcycles, buses, \etc~(\figref{fig:meskel}). The many traffic participants  act aggressively to compete for the passageway and avoid collisions, leading to complex interactions and sometimes chaotic traffic.  To drive effectively in such an environment, the robot vehicle must perform \textit{long-term} planning in order to hedge against potential hazards in the future and balance short-term and long-term risks.  The primary challenge is the \textit{scalability} of planning in high-dimensional state spaces: for crowd-driving, the world state is the cross-product of the individual states of the ego-vehicle and many traffic participants nearby. The challenge compounds with \textit{uncertainties}, as a result of complex environment dynamics, as well as imperfect robot control and sensing.

\begin{figure}[!t]
  \begin{center}
    \includegraphics[height=0.19\textwidth]{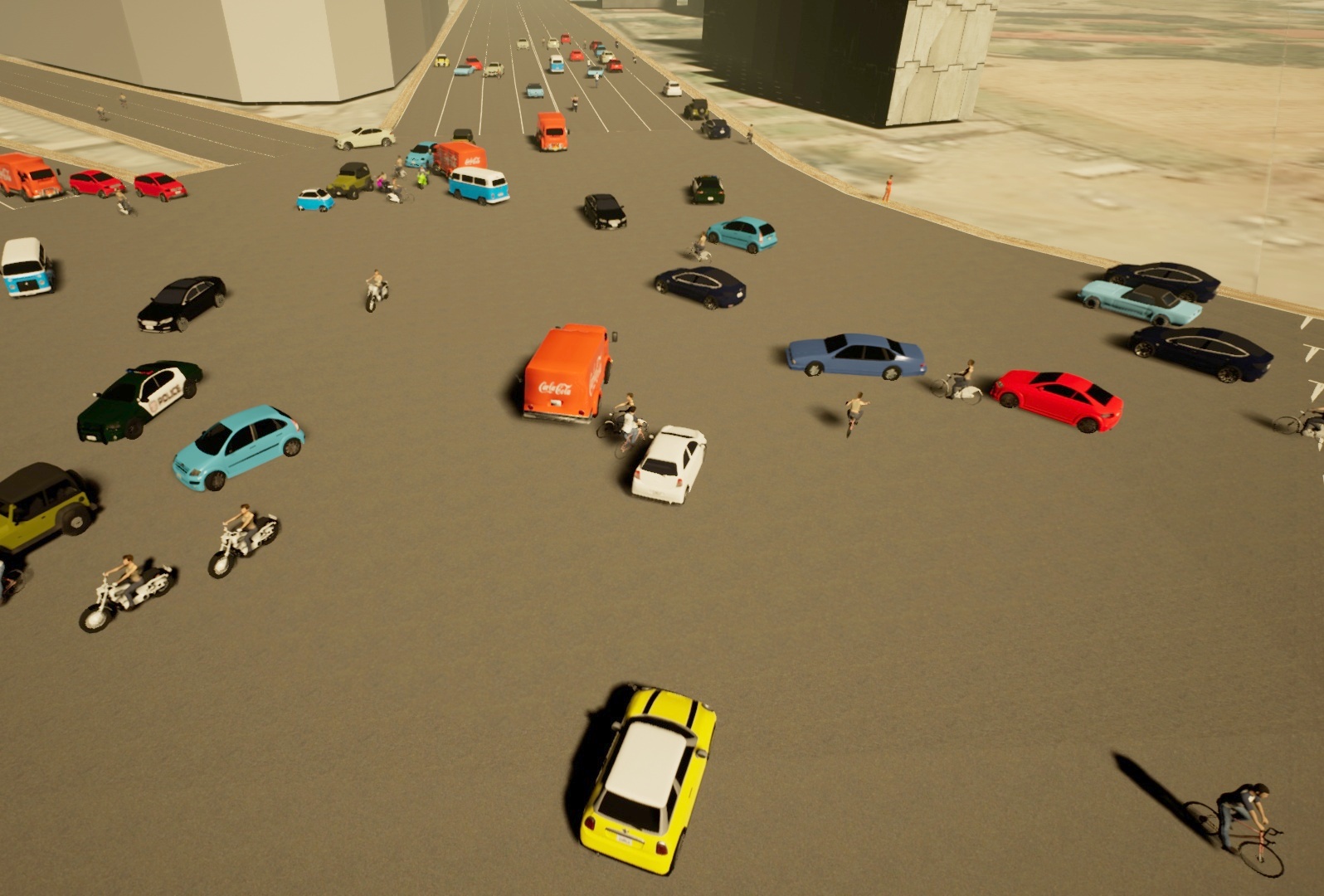}
    \includegraphics[height=0.19\textwidth]{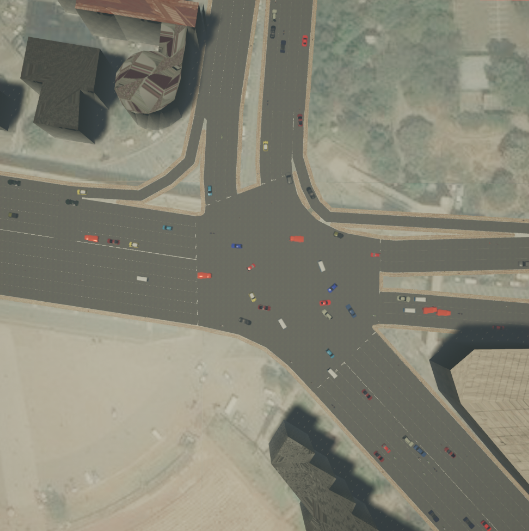}
  \end{center}
  	\vspace{-0.2cm}
	\caption{Crowd-driving. Drive through dense, unregulated,  heterogeneous traffic of cars, motorcycles, buses, pedestrians, \ldots\, in complex urban environments.}
	\vspace{-0.1cm}
    \label{fig:meskel}
\end{figure}

%% Planning and the technical question
One common approach to real-time planning  is to perform online look-ahead search under a suitable model.
The challenge of long-term planning then depends directly on the search horizon,~$H$. With increasing $H$, the size of the search tree  grows exponentially, quickly breaching the limit of real-time computation. Further, the model error, if any, accumulates and eventually results in sub-optimal decisions. Naturally, we ask: \textit{can we reap the benefits of long-term planning without a deep search?}  

%%  solution idea
To tackle this challenge, we propose \textit{Learning from Tree Search for Driving} (\algname), which integrates planning and learning in a closed loop. 
% \rvs{We achieve this by} learning \rvs{a set of heuristics---a policy neural network and a value neural network}---both \textit{from} and \textit{for} \rvs{online planning under uncertainty}. 
% The planner provides the data for learning and, at the same time, benefits from learning for improved real-time planning performance. 
The algorithm comprises two key ideas:
\begin{itemize}
    \item plan locally and learn globally, and
    \item close the planning-learning loop.
\end{itemize}
% so that both planning and learning improve in synchrony.
The algorithm consists of two interacting components, an online planner and a learner.
 The planner plans \textit{locally}, through online look-ahead search with a short horizon, and
 relies on learned heuristics---policy and value neural networks trained from data---as \textit{global} approximations to guide the search.
In parallel, the  learner gathers experiences from the planner and uses the data to  improve the  policy and value networks continuously. It feeds the improved heuristics back to the planner, thus \textrm{closing the planning-learning loop} and improving both planning and learning in synchrony. As a result, \rvs{\algname learns the heuristics both \textit{from} and \textit{for} online planning  in a self-supervised manner, without human effort on explicit data labeling.}
See \figref{fig:overview} for an illustration.

%%  connect to POMDP planning
\rvs{\algname uses the  \emph{partially observable Markov decision process}
(POMDPs) as a model of uncertainties.}
We build our planner on top of HyP-DESPOT \cite{Hyp-despot}, a
state-of-the-art online POMDP planning algorithm, by integrating it with
learned policy and value networks.
% to develop a guided belief tree search algorithm. 
% By planning locally and learning globally in a closed loop, 
\rvs{\algname greatly
improves the scalability of online planning under uncertainty.} Furthermore, it provides
theoretical guarantee on the near-optimality of its decisions, despite using heuristics learned approximately. 

\algname is flexible and can take advantage of both self-supervised and
reinforcement learning.
The self-supervised learner fits the policy network
and the value network directly to the planner outputs. It then iteratively
updates the policy, using tree search as the policy improvement
operator.
% When model error is negligible, \algname uses a
% self-supervised learner to directly fit the policy and value networks to the
% planner outputs.  \rvs{Doing so, it iteratively updates both the learner and
%   planner policies}, treating belief tree search as the policy improvement
% operator.
% When facing significant model error, \algname alternatively uses a
The reinforcement learner learns a policy network from environmental
feedback directly, treating the planner as an off-policy actor to provide high-quality
exploration and reward.

%% connection to ALphaGo and differentiation
% \algname benefits from both planning and learning. 
The underlying idea of \algname aligns in spirit with AlphaGO-Zero
\cite{AlphaGOZero}, which uses learning-guided game tree search.  AlphaGO-Zero has
beat the human world champion of GO, a perfect-information two-player board
game. However, real-life robotics tasks, such as driving in dense traffic,
pose the new challenges of partial observability, complex dynamics, and
interaction with many heterogeneous agents.  The resulting uncertainties are
major obstacles to scalability.
 
We  evaluate \algname in a realistic simulator, \mbox{SUMMIT} \cite{SUMMIT},
which simulates dense, unregulated urban traffic at worldwide locations.
Given any urban map supported by the OpenStreetMap \cite{OSM}, SUMMIT automatically
generates realistic traffic, using GAMMA \cite{luo2022gamma}, a recently developed traffic
model that has been validated on multiple real-world datasets.
% \rvs{Our experiments compare \algname with standard planning and learning approaches, as well as the open-loop integration of them.} We examine different variants of \algname, considering both self-supervised and reinforcement learners.
Our results show that by integrating planning and learning, LeTS-Drive
significantly outperforms either planning or learning alone.
% It scales up better than standard online planning, with increasing of traffic participants, and it learns more efficiently than standard policy learning.  
Further, closed-loop integration
enables significantly faster learning and better asymptotic performance than
open-loop integration.  After training, \algname exhibits sophisticated driving behaviors
in dense, chaotic traffic, and generalizes to diverse environments.

% In the following, we start with a short background review (\secref{sec:background}). We then give an overview of \algname (\secref{sec:overview}), followed by details on our POMDP model for autonomous driving in a crowd (\secref{sec:pomdp}), on learning-guided planning  (\secref{sec:planning}), and on planning-informed learning (\secref{sec:learning}).  We report experimental results in \secref{sec:results}. Finally, we conclude and point out directions for future research. (\secref{sec:conclusion}). 

%===============================================================================

\section{Background}\label{sec:background}

\subsection{Online POMDP Planning}
Planning under uncertainty is critical for robust robot performance in complex, dynamic environments. A key challenge is \textit{partial observability}: true system states are not known accurately and only  revealed partially from sensor observations. A principled solution is belief-space planning: maintain a \emph{belief}, a probability distribution, over possible system states; then, conditioned on the belief, predict possible future states and observations, and optimize the robot's control policy in simulated hindsight. This process is formalized as the partially observable Markov decision process (POMDP) \cite{Kaelbling_1998}.

Formally, a POMDP model is represented as a tuple $(S,A,Z,T,O,R)$, where $S$ represents the state space of the world, $A$ denotes the space of possible actions, and $Z$ represents the space of observations. $T$, $O$, and $R$ denote the transition function, observation function, and the reward function, respectively. Concretely, the model assumes a discrete-time Markovian random process.
% The transition function $T$ characterizes the dynamics of the world; \rvs{The observation function $O$ characterizes the sensing capability of the robot and its respective noise;} The reward function $R$ specifies how desirable it is to take any specific action at any specific state. 
When the robot takes an action $a$ at a state $s$, it assumes the world transits to a new state $s'$ at the next time step with a probability $p(s'|s,a)=T(s,a,s')$. After that, the robot receives an observation $z$ with probability $p(z|s',a)=O(s',a,z)$, together with a real-valued reward $R(s,a)$.

To plan, the robot maintains a belief \node, a probability distribution over $S$, where $b(s)$ denotes the probability of the robot \rvs{being} in state $s$. POMDP planning searches for a belief-space \emph{policy} $\pi\colon \mathcal{B}\rightarrow A$, which prescribes for each belief $b$ in the belief space $\mathcal B$ an action $a$ that optimizes future values. For infinite horizon POMDPs, the \emph{value} of a policy $\pi$ at a belief $b$ is defined as the \textit{expected total discounted reward} over time, achieved by executing the policy $\pi$ from $b$:
\begin{equation}
 V_{\pi}(b)=\expect\left(\sum\limits_{t=0}^{\infty} \gamma^t R(s_t, \pi(b_t))  \,\bigg{|}\,b_0=b\right),
 \label{eqn:value}
\end{equation}
%  $t$ is a future time step; 
where  $\gamma\in[0,1)$ is a discount factor and \rvs{$t$ is the time step}.
% that, when set $<1$, emphasizes short-term rewards over long-term ones. 

Complex tasks are often solved using online planning: at each time step, the planner computes an optimal action $a^*$ for the current belief $b$, executes it immediately, and re-plans in the next time step. 
Online planning is usually performed using \emph{belief tree search}. The search starts from the current belief and constructs a tree consisting of all reachable beliefs in the future. This is achieved using \textit{Monte Carlo (MC) simulations}---the robot takes an action at the current state, receives an observation of the outcome, then takes the next action, so on and so forth.
% \rvs{Each top-down path in the tree represents a possible outcome of \textit{Monte Carlo (MC) simulations}---the robot takes an action at the current state, receives an observation of the outcome, then takes the next action, so on and so forth.} 
A \textit{full belief tree} considers all possible outcomes of MC simulations. It \textit{recursively} branches with all feasible actions then all possible observations, until reaching a maximum planning horizon $H$. 

% The above belief update process is commonly known as Bayesian filtering \cite{chen2003bayesian}.

The desired output of belief tree search is an optimal policy, $\pi^*$, that maximizes the value at the current belief \rootnode: 
\begin{eqnarray}\label{eqn:optpol}
\pi^*&=&\argmax_{\pi\rvs{\in \Pi}} V_{\pi}(\rootnode),
\end{eqnarray}
\rvs{where $\Pi$ denotes the set of all possible closed-loop policies.}
The optimal policy is computed by applying the Bellman's operator to all nodes in the belief tree:
\begin{eqnarray}\label{eqn:backup}
\fontsize{9}{10}
\polvalue(\node)&=&\max_{\act \in A} \polvalue(\node, \act)\label{eqn:bellman1}\\
\polvalue(\node, \act)&=& \rfun{\node}{\act}+\gamma
	\sum_{\obs \in \obsset}p(\obs|\node,\act)\polvalue(\newnode) \label{eqn:bellman2}%\nonumbering
	\end{eqnarray}
\noindent 
where \node is a belief node and \newnode is a child of \node produced by an action-observation pair, $a$ and $z$, through \rvs{Bayesian belief update \eqrefb{eqn:bayes_rule}}.
% The operator computes the optimal value (thus the optimal action) at a node \node by backing up values from its child nodes, $\newnode$'s, each determined by a unique action-observation pair.
The first term of Eqn. \eqref{eqn:bellman2}, $\rfun{\node}{\act}=\sum_{s\in S}R(s,a)\rvs{b(s)}$, calculates the expected immediate reward of taking action \act at belief $b$. The second considers the expected long-term value marginalized over child beliefs, 
where,
\begin{eqnarray}\label{eqn:p_bza}
	p(\obs|\node,\act)&=&\sum_{s'\in S} O(s',a,z)\sum_{s\in S} T(s,a,s')b(s),
\end{eqnarray}
\vspace{-0.2cm}

\noindent represents the probability of observing the relevant \obs after taking action \act at \node.
% , marginalized over possible starting states $s$ and next states $s'$. 

A naive approach of optimal planning is to perform dynamic programming in the belief tree, using Eqn. \eqref{eqn:bellman1} and \eqref{eqn:bellman2} as the backup operator. 
After planning, the robot executes the optimal action at the root.
% Then, the optimal \textit{policy tree} is extracted from the full belief tree, which assigns each belief node a single action with the highest value but still branches with possible observations. When finishing planning, the robot executes the optimal action \rvs{at the root of the optimal policy tree}.
Then, it updates the current belief according to the action $a_t$ taken and the observation $z_t$ received, using a Bayes filter \cite{chen2003bayesian}:
\begin{equation} \label{eqn:bayes_rule}
% \newnode(s')=\eta O(s',a,z)\sum\limits_{s\in S} T(s,a,s')\node(s),
b_{t}(s')=\eta O(s',a_t,z_t)\sum\limits_{s\in S} T(s,a_t,s')b_{t-1}(s).
\end{equation}
The filter first considers the state-transition probabilities, $T(s,a_t,s')$, then applies the likelihood of the observation, $O(s',a_t,z_t)$, and finally, normalizes probabilities of states using a normalization constant $\eta$.
After update, the new belief $b_t$ becomes the entry point of the next planning cycle. 

\subsection{Computational Complexity} 
% \rvs{of Optimal Planning}}
POMDP planning suffers from the well-known ``curse of dimensionality'' and ``curse of history'' \cite{Kaelbling_1998}.
The cost of building a full belief tree is $O(|A|^H|Z|^H)$, where $|A|$ and $|Z|$ are the size of the state and observation spaces, respectively, and $H$ is the planning horizon. \rvs{Online planning typically allows for a fixed amount of planning time, \eg, one second or below. Building full belief trees quickly becomes intractable when the action space size, observation space size, or the planning horizon increase.} 
% The cost quickly becomes intractable for  tasks with large action spaces, large observation spaces, or long planning horizons, leading to sub-optimal performance.

State-of-the-art belief tree search algorithms, POMCP \cite{POMCP} and DESPOT \cite{DESPOT}, have made online POMDP planning practical. They have been successfully applied to real-world tasks such as autonomous driving \cite{bai2015intention,meghjani2019context}, clutter manipulation~\cite{xiao2019online}, multi-agent planning~\cite{walker2020framework}, \etc. 
Practical POMDP planning uses two core ideas: \textit{MC sampling} \rvs{and \textit{anytime heuristic search}.}
They sample the starting states and future outcomes of MC simulations, then condition belief tree search on the sampled states and observations to reduce the computational cost. \rvs{Further, the algorithms approximate the value of unsearched branches using heuristics, so that the tree search can be terminated at anytime and make approximate decisions.}
% \rvs{A sampled MC simulation is often generated by rolling out a black-box \textit{step function} $g$. Each forward step $(s', z, r)=g(s,a)$ simulates the immediate consequence of applying an input action $a$ on a starting state $s$, by sampling the transited state $s'$, observation $z$ and reward $r$ from the POMDP transition, observation, and reward functions, \ie, $T$, $O$, and $R$.}

Specifically, DESPOT \cite{DESPOT} conditions belief tree search on a set of $K$ sampled \textit{scenarios}. 
It reduces the complexity of planning to $O(|A|^HK)$ while achieving near-optimality of decisions (see \secref{sec:search} for details).
HyP-DESPOT \cite{Hyp-despot} further scales up DESPOT through massive parallelization, achieving state-of-the-art performance on large-scale POMDP planning benchmarks. It has also been successfully applied to driving tasks in both off-road \cite{PORCA} and on-road \cite{SUMMIT} scenes.

Despite the progress, DESPOT and HyP-DESPOT may still struggle with \rvs{online} planning tasks with very long horizons or large action spaces, \rvs{producing highly sub-optimal decisions}. 
% large-scale, long-horizon planning tasks---the computational cost, $O(|A|^HK)$, still grows exponentially with the planning horizon and the size of the action space.
\algname addresses the challenge through learned heuristics. The learned policy network reduces the effective branching factor by guiding the search. The learned value network reduces the search to a shortened horizon $D$, beyond which \algname simply uses the learned values as approximations. 
% We propose to search the HyP-DESPOT tree only till a short horizon $D$, within which we perform explicit planning and beyond which we leverage \textit{learned} values. 
\rvs{With heuristic values learned properly}, the cost of \rvs{optimal} online planning is further reduced to $O(|A|^DK)$. With $D\ll H$,  \algname  becomes exponentially more efficient.
% \footnote{In our experiments, we set $D$ and $H$ to be $5$ and $10$, respectively. Each time step corresponds to $1/3$ seconds, consistent with the re-planning rate.}.

\subsection{Integrating Planning and Learning}

Integrated planning and learning brings benefits from both  the power of explicit reasoning and the robustness of learning from data~\cite{AlphaGOZero}. \rvs{Recent advances in machine learning bring many new interesting opportunities in this direction.}

One approach is to develop \textit{differentiable planners}, \ie, impose a planning algorithm as the structure prior on the neural network (NN) architecture for learning, so that both the model and algorithm parameters are trained jointly end-to-end~\cite{VIN,qmdp-net,treeqn,mcts-net,upn,dpc}.
UPN \cite{upn} and DPC \cite{dpc} have implemented trajectory optimization and model predictive control using neural networks. 
For MDP/POMDP planning, VIN~\cite{VIN} and QMDP-Net~\cite{qmdp-net} encode the value iteration algorithm in an NN to solve navigation tasks. 
TreeQN embeds a fixed forward search tree into an NN~\cite{treeqn}.
MCTS-Net further performs dynamic tree search using learned tree search operators~\cite{mcts-net}. 
As expected, the learned networks face the same challenge of scalability as the underlying algorithm: value iteration works well only in low-dimensional discrete state spaces; searching a big tree using NN operators is not affordable in real-time planning.

Another approach is \textit{learning for planning}, \ie, injecting learned components into planning. 
% Many planner sub-components can be learned.
A natural choice is to learn the dynamics and observation models and utilize them for planning or optimal control (model-based RL) \cite{ayoub2020model,planet,cavin}.
% One may also learn sampling distributions \cite{liu2020learned} for motion planning, local goals~\cite{waypoint-net} for visual navigation, or macro-actions \cite{MAGIC} for POMDP planning.
One may also learn sampling distributions \cite{liu2020learned}, local goals~\cite{waypoint-net}, or macro-actions \cite{MAGIC}, and use them to assist planning.
% For real-time planning under uncertainty, we identify that the main contributor to the complexity is a long planning horizon.
% We identify that a long planning horizon is the main contributor to the complexity of online planning under uncertainty.
\rvs{We propose to learn heuristics for planning under uncertainty, in order to alleviate the exponential cost of online POMDP planning.} 
% To the best of our knowledge, there is little similar attempt except our own earlier work \cite{lets-drive}.
Our earlier work \cite{lets-drive} integrates POMDP planning with learning using two stages. The offline stage learns a policy and its value function from POMDP planning. The online stage uses the learned policy and values to guide online search. The algorithm has demonstrated success in driving among a crowd of pedestrians. However, its performance is limited, as the learner and the planner do not improve over time with additional experiences. 
% does not benefit from new experience gathered by the improved planner.

This paper extends our earlier work \cite{lets-drive} in several aspects. 
First, \algname closes the planning-learning loop. 
The planner benefits from the learned policy and values for improved real-time planning performance; at the same time, it provides the data for learning the policy and value approximations. 
% Training starts with randomly initialized neural networks, improving the planner and the learner in synchrony.
Second,  it provides theoretical guarantee on the near-optimality of its decisions despite the use of learned heuristics.
% and a new method to learn heuristics through self-supervised and reinforcement learning. 
Finally, we apply \algname to a  more challenging driving setting in simulation, with dense, heterogeneous traffic and complex road structures. 
% Novel solutions to these challenges include a more sophisticated POMDP model and a new factorization approach for value learning.
% First, \algname leverages self-supervised learning to close the planning-learning loop. It does not learn from a fixed expert. 
% Instead, the guided planner, which uses the latest learned policy and value networks, provides data for policy and value learning.
% The planner and learner improve synchronously, starting with randomly initialized neural networks. 
% We additionally use a reinforcement learner to benefit from environmental feedback, which is important when significant model errors generate biased planner outputs.
% Further, by combining learned values with MC estimates, our planner guarantees the near-optimality of planning despite using \priors learned approximately.
% Finally, the new algorithm enables the application to a more challenging driving setting, handling higher-speed traffic, heterogeneous participants, and complex road structures. Novel solutions to these challenges include a more sophisticated POMDP model and a new factorization approach for value learning.

\section{Overview} \label{sec:overview}

The \algname algorithm consists of two interacting components: 
\begin{itemize}
    \item an online planner that plans robot actions  \figrefb{fig:overview}{a}, and
    \item a learner that learns policy and value approximations, represented as neural networks \figrefb{fig:overview}{b}.
\end{itemize}  
% The planner performs online belief tree search with the help of learned policy and value neural networks \rvs{\figrefb{fig:overview}{a}}; the learner learns the neural networks from the planner's experience \rvs{\figrefb{fig:overview}{b}}.
% Training starts with randomly initialized neural networks.
The planner and the learner run concurrently,  forming a closed loop between planning and learning. 
% The \algname algorithm consists of two interacting components: an online POMDP planner that leverages learned policy and value as heuristics \rvs{\figrefb{fig:overview}{a}}, and a learner that learns a policy network and a value network from the planner's experience \rvs{\figrefb{fig:overview}{b}}. Training starts with randomly initialized policy and value networks.
% The planner and the learner run concurrently. 
% Together they form a closed loop between planning and learning. 

The planner is the actor. In each time step, it performs  belief tree search at the current belief,  using the learned policy and value networks \figrefb{fig:overview}{c} as heuristics to guide the search. It then chooses the action with the highest estimated value  for execution. At the same time, the planner collects data for learning. It forms a data tuple, consisting of the current belief, \rvs{supervision labels such as the planned action and the estimated value}, and the reward received from the environment after executing the action. It then adds the tuple to a data buffer to share with the learner.
Concurrently, the learner samples data from the shared buffer containing the planner's experiences and uses them to optimize the policy and value networks. 
In each  training iteration, the learner samples a fixed number of data points.
% and  updates the neural networks using gradient descent.
For self-supervised learning, it fits the policy network to the labeled actions and fits the value network to the labeled values \rvs{\figrefb{fig:overview}{d}}.
For reinforcement learning, it
optimizes the policy network using the reward from the environment \rvs{\figrefb{fig:overview}{e}}.
% At the end of the iteration, the learner updates the neural networks back to the planner \rvs{\figrefb{fig:overview}{c}}, closing the planning-learning loop. It then proceeds to the next training iteration.

\begin{figure} [t]
\centering
    \centering
	\includegraphics[width=\columnwidth]{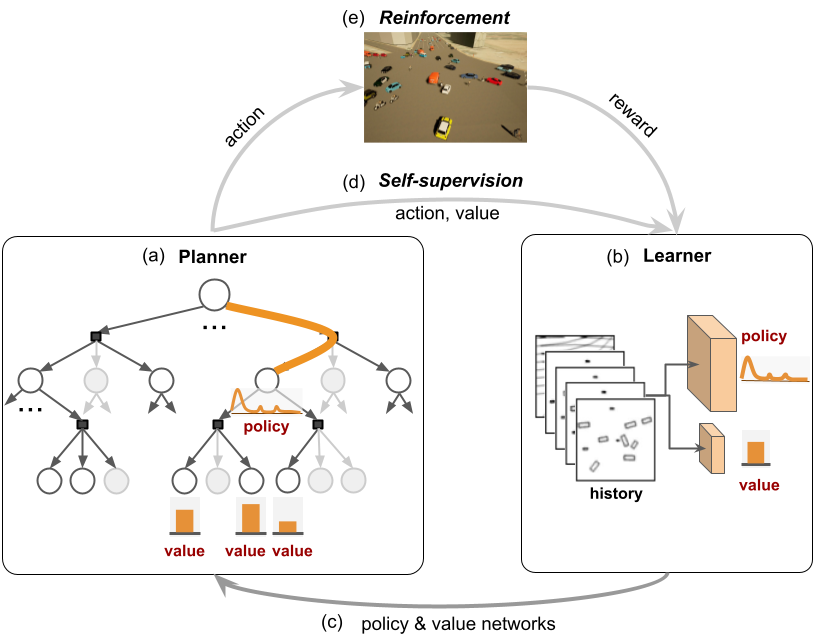}
	\caption{ \algname consists of an online planner and a learner, running concurrently. (a) Online POMDP planning. (b) Learning policy and value neural networks. (c) To speed up planning, the learner provides the planner policy and value approximations.  (d) The planner provides actions and values as data to the  self-supervised learner.  (e) The planner provides actions and the simulator provides rewards as data to the reinforcement learner.
% 	The planner performs belief tree search (a) to make real-time decisions. The search is guided by a set of neural networks (b), including a policy network that produces prior probabilities to bias action exploration (a1) and a value network that provides initial value estimates for leaf nodes (a2). Using experience generated by the planner, the learner performs either self-supervised learning (d) or reinforcement learning (e) to optimize the neural networks (b). Updated neural networks are constantly sent from the learner to the planner (c), closing the planning-learning loop.
	}
	\label{fig:overview}
\end{figure}

Effectively, \algname maintains two policies. 
The planner induces a policy implicitly. The learner represents a policy explicitly as a neural  network, learned from the planner's experiences.  
% a \textit{planner policy} and a \textit{learner policy}. 
Both policies are useful. 
The learner policy directly maps state histories to actions. It is simple and fast. 
% To execute the planner policy, the planner  
The planner policy performs guided belief tree search, on top of learned policy and value approximations. This improves the quality of action selection in complex situations. If the search tree depth is $0$, then the planner  and  learner policies become the same.

The planning-learning loop 
starts with randomly initialized policy and value networks, and improves both the planner policy  and the learner policy in synchrony through accumulated experiences.

% \rvs{In our experiments, we use crowd-driving as an representative task to analyze our approach, for which we propose a specific POMDP model and neural network architectures. However, the \algname algorithm is generally applicable to many large-scale, long-term tasks that involves uncertainty.}

In the following, we present a specific POMDP model for autonomous driving in a crowd as a representative task (\secref{sec:pomdp}). We then describe  the learning-guided planner  (\secref{sec:planning}) and the planning-informed learner (\secref{sec:learning}). 
The underlying idea of \algname is general and not specific to autonomous driving.
% There exists other POMDP models for different driving settings, which are also interesting to consider \cite{ulbrich2013probabilistic,wray2017online,sunberg2020improving}

\section{POMDP for Driving in a Crowd} \label{sec:pomdp}
% In this section, we present our POMDP model for autonomous driving in a crowd (\textit{\task}).
The \task task is to control an ego-vehicle to drive in dense, unregulated traffic, \eg, to cross an uncontrolled intersection as fast as possible. The ego-vehicle has to drive safely and conform to the lane network of the map, \ie, on lane directions and lane connectivity. The traffic is not properly regulated by traffic rules. Participants drive aggressively, \eg, can abruptly cut through the way of and overtake the ego-vehicle, forming a challenging dynamic environment.

The POMDP model primarily models the uncertainties in the \textit{intentions} and \textit{attentions} of \textit{exo-agents}. An exo-agent is a nearby traffic participant potentially interfering with the ego-vehicle within the planning horizon.
The intention of an exo-agent specifies which route on the urban map it intends to take, and its attention specifies whether it will actively avoid collision with others (attentive) or not (distracted). Our POMDP model is improved from the one described in \cite{SUMMIT} by allowing the ego-vehicle to plan both its driving path and longitudinal accelerations.
The new model also uses a factored reward function to facilitate value learning. 

\subsection{States and Beliefs}
Our state includes both continuous-valued physical states and discrete-valued hidden states of involved traffic participants:
\begin{itemize}
  \item Physical state of the ego-vehicle, $s_c=(x,y,\vec{v},\phi)$, including the position $(x,y)$, the velocity $\vec{v}$, and the heading direction $\phi$.
  \item Physical states of exo-agents, $\{s_i=(x,y,\vec{v},\phi)\}_{i\in I_{exo}}$, including the position $(x,y)$, the velocity $\vec{v}$, and the heading direction $\phi$ of each exo-agent. $I_{exo}$ defines the indices of exo-agents. Physical states can be detected from sensor inputs, but imperfectly.
  \item Hidden states of exo-agents, $\{\theta_i=(t_i, \mu_i)\}_{i\in I_{exo}}$, including the driver's attention $t_i$ (attentive / distracted) and the intended route $\mu_i$ of the $i$th traffic agent. The hidden states can not be detected by sensors and can only be inferred from history.
\end{itemize}
\rvs{We assume the set of exo-agents stays constant during planning, and update it before each online planning cycle.}
 
A \textit{belief} $b$ thus encodes a posterior distribution over 1) physical states of all agents, and 2) exo-agents' hidden variables (intentions and attentions). The belief is updated at every time step according to new observations using a Bayes filter \eqrefb{eqn:bayes_rule}.

\subsection{Actions}
An action of the ego-vehicle is the cross product of the lane-keeping/changing decision and the longitudinal acceleration. Each dimension contains three possible values: for lane decisions, $\{Left, Keep, Right\}$, and for accelerations, $\{Acc, Maintain, Dec\}$.
Candidate lanes are extracted from the lane network of the map.
% \rvs{If a neighboring lane does not exist or holds the wrong direction, the corresponding action is deactivated.} 
A lane decision is executed using a pure-pursuit algorithm \cite{systembook} to track the center path of the selected lane.
Acceleration values for $Acc$ and $Dec$ are $3m/s^2$ and $-3m/s^2$, respectively. The maximum speed of the ego-vehicle is $6m/s$, from which it takes 2 seconds to reach a full stop.

\subsection{Observations and Observation Function}
The observation function captures the observability of state variables and the imperfection of sensing. For a given state $s=(s_c, \{s_i, \theta_i\}_{i\in I_{exo}})$, the observation $z$ is $(s_c, \{s_i\}_{i\in I_{exo}})$ with discretized values. 
% For instance, the observation of heading directions is discretized into several bins; the observation of an exo-agent's position is the cell it locates in within the discretized gird of the local area.

\subsection{State Transitions \label{sec::transition}}
The state transition model simulates how traffic agents interact with each other. It inputs the state of all agents, $s=(s_c, \{s_i, \theta_i\}_{i\in I_{exo}})$, and the action of the robot, \act, and predicts the next state of all agents, $s'=(s_c', \{s_i', \theta_i\}_{i\in I_{exo}})$. Motion predictions are conditioned on the intention and attention of agents and constrained by kinematics.
The model assumes distracted agents blindly track their intended paths with the observed speeds; attentive agents follow an optimal local collision avoidance model \cite{luo2022gamma} to interact with others, while best following their intended paths. 
Kinematics of all vehicle-like agents are approximated using bicycle models \cite{systembook}; kinematics of pedestrians are modeled as holonomic. 
Finally, we perturb the displacements of all agents with Gaussian noises to model the stochasticity of human behaviors.
% This is the same motion model as used in the crowd-driving simulator, SUMMIT. However, the model used in planning is weaker. It only has access to agents around the ego-vehicle, and ignores other agents interfering with the agents but not with the ego-vehicle. Additionally, this model is only used for attentive agents for the concern of computational complexity. This is sensible because the distracted motion model can well explain some agent's history motion as identified by the Bayesian belief tracker.

\subsection{Rewards} \label{sec:raw_reward}
The reward function is defined as follows.
When the vehicle collides with any exo-agent or obstacle, we assign a huge penalty, $R_\mathrm{col}=-1000\times(v^2 + 0.5)$, increasing quadratically with the colliding speed $v$, to enforce safety. For efficiency, we assign each time step a speed penalty $R_\mathrm{v} = 4(v-v_{max})/v_{max}$ to encourage driving at the maximum speed $v_{max}=6.0m/s$. We further impose a smoothness penalty $R_\mathrm{acc} = -0.1$ for each deceleration to penalize excessive speed changes, and a penalty of $R_\mathrm{change} = -4$ for each lane change to avoid jerky paths. 
The rewards are additive.

\subsection{Factoring Reward and Value Functions for Learning} \label{sec:factorization}
The above reward function effectively encodes the objective of safe, efficient, and smooth driving. However, it leads to a highly non-smooth value function---the magnitude of values drastically increases near collision events. To facilitate value learning, we have further decomposed the value function into two \textit{smooth} factors: a \textit{safe-driving factor} capturing efficiency rewards and smoothness penalties, and a \textit{collision factor} that captures collision risks and the corresponding penalties. Values of the two factors are computed using factored backup in the belief tree search. We learn the two factors independently and use the linear combination of them to recover the original value function. This is possible since the reward function is additive and the backup operator is linear. See Appendix \ref{sec:factored_reward} for details of the reward and value factorization.

\section{Learning-Guided Planning} \label{sec:planning}
% \rvs{The \algname planner performs real-time planning for the driving problem \secrefb{sec:pomdp}.} 
To efficiently solve large-scale POMDP problems, \algname integrates online belief tree search with learned policy and value functions, using them to reduce the computational cost and improve real-time performance.
% The planner takes as input the current belief over agents' physical states and hidden states, and outputs real-time driving actions for the ego-vehicle. 
% We additionally ensure the near-optimally of online POMDP planning despite searching with learned heuristics.
% The integration of learned heuristics brings multiple benefits here. From the planning perspective, this largely improves solution qualities under real-time constraints and prevents deep searching trees that cause major problems in long-term planning; From the learning perspective, the belief tree search offers high-quality supervision signals to the learner, serving as a policy improvement operator. 
Our planner is built on top of HyP-DESPOT \cite{Hyp-despot}, a state-of-the-art online belief tree search algorithm. In \secref{sec:search}, we provide a brief summary of HyP-DESPOT. Then, we present our extensions over HyP-DESPOT in \secref{sec:use_priors}, and state our theoretical guarantee on the near-optimality of planning in \secref{sec:convergence}.
% , and refer readers to \cite{Hyp-despot} for further details

\subsection{HyP-DESPOT} \label{sec:search}

HyP-DESPOT samples a set of scenarios as representatives of the stochastic future.
Each scenario, $\scenario=(\sinit, \randnum_1,\randnum_2,...)$, contains an initial state \sinit sampled from the current belief and a sequence of random numbers, $\randnum_1,\randnum_2,...$, for determining the outcome of Monte Carlo simulations. Specifically, a simulation step,  $(s',z,r) = g(s,a,\randnum)$, samples a transited state $s'$, an observation $z$, and a reward $r$ using the POMDP model. Sampling is determinized by the input random seed $\randnum$, where $\randnum_i$ is used for the $i$th future time step. 
Each scenario thus corresponds to a deterministic belief tree of size $O(|A|^H)$, which only considers a single sampled observation as the outcome of each action.
HyP-DESPOT uses a collection of $K$ sampled scenarios to approximate the future, constructing a sparse belief tree of size $O(|A|^HK)$.
The root of the tree contains $K$ sampled initial states. Then, the tree recursively branches with all possible actions but only observations \textit{encountered under the sampled scenarios}.
Each node \node in the tree captures a subset of scenarios \scenariosetnode{\node} that visits the node, whose updated states approximate a future belief.

HyP-DESPOT performs anytime heuristic search to construct the belief tree. It maintains for each node an \textit{upper bound} and a \textit{lower bound} estimate of the node value, \ub{\node} and \lb{\node}, and uses them as tree search heuristics. In each iteration or each \textit{trial}, HyP-DESPOT starts from the root node \rootnode, \textit{traversing} a single \textit{exploration path} down to a leaf to expand the tree. 
% At each node along the path, HyP-DESPOT chooses an action branch then an observation branch under it to explore according to the upper bound and lower bound heuristics. 

% Specifically, an exploration path alternates between action and observation selections.
At each node along the path, it selects the action branch with the highest optimistic outcomes:
\begin{eqnarray} \label{eqn:hypdespot_action_heuristics}
\optact&=&\argmax_{\act \in A}\uba{\node}{\act}
\end{eqnarray}
\vspace{-0.2cm}

\noindent
where $u(b,a)$ denotes the upper bound value to be achieved if applying \act at \node, computed from upper bounds of child nodes as in Eqn. \ref{eqn:bellman2}.
Moving down, the path traverses the observation branch with the maximum remaining uncertainty, which is measured using the gap between the upper and lower bound estimates. See \cite{Hyp-despot} for details of the observation selection heuristics.
% :
% \begin{eqnarray}\label{eqn:hypdespot_obs_heuristics}
% \optobs&=&\argmax_{\obs \in Z_{\node,{\optact}}} \weu{\newnode}.
% \end{eqnarray}
% The set $Z_{\node,{\optact}}$ contains all observations generated by scenarios in \scenariosetnode{\node}, where each $\obs\in Z_{\node,{\optact}}$ leads to a child node \newnode. The WEU, $\weu{\newnode}$, is determined by the \textit{uncertainty gap} at \newnode, measured by the gap between the upper and lower bound estimates, $\gap{\newnode}=\ub{\newnode}-\lb{\newnode}$. See \cite{Hyp-despot} for details of the WEU measure.}
% \rvs{Moving down, HyP-DESPOT traverses the observation branch with the maximum \textit{weighted excess uncertainty (WEU)}:
% \begin{eqnarray}\label{eqn:hypdespot_obs_heuristics}
% \optobs&=&\argmax_{\obs \in Z_{\node,{\optact}}} \weu{\newnode} \\ \weu{\newnode}&=&\gap{\newnode}-\frac{\scenariosetsize{\newnode}}{\nscen}\cdot \xi \gap{\rootnode}
% \end{eqnarray}
% where $Z_{\node,{\optact}}$ denotes the set of observations encountered under scenarios in \scenariosetnode{\node} after applying action \optact at \node; \newnode is the child node of \node corresponding to a $\obs \in Z_{\node,{\optact}}$; Uncertainty at \newnode is measured by the gap between the upper and lower bounds of value estimates, $\gap{\newnode}=\ub{\newnode}-\lb{\newnode}$. The WEU \weu{\newnode} thus measures the amount of uncertainty contained in node $\newnode$ with reference to that in the root node \rootnode.  The constant $\xi$ controls the target level of uncertainty to be achieved by the trial.

When reaching a leaf node, the trial \textit{expands} the node using all possible next actions and sampled observations.
Afterward, the trial \textit{initializes} the upper and lower bound values of the new nodes. It performs Monte Carlo (MC) roll-outs to initialize lower bounds and uses explicit heuristic functions to initialize upper bounds (there generally exist ones that can be easily written down). Both are conditioned on the sampled scenarios. We refer to upper and lower bounds calculated in this way as the \textit{MC value estimates}. The initial MC estimates are denoted as $u_0(\node)$ and $l_0(\node)$.

The traversal continues until
% reaching a maximum search depth (typically set as the planning horizon) or when
further expansion is no longer beneficial. HyP-DESPOT thus ends the trial and immediately \textit{backs up} new information to the root, updating upper and lower bounds for belief nodes along the way using the Bellman's operator (Eqn. \ref{eqn:bellman1} and \ref{eqn:bellman2}).
After finishing the backup, a new trial starts.

HyP-DESPOT executes in an anytime fashion, terminating the search until a maximum planning time is reached or when the gap between the upper and lower
bounds at the root becomes sufficiently small, in which case it produces near-optimal actions. 
HyP-DESPOT further performs parallel tree search. Multiple threads execute concurrent trials to expand the tree collaboratively. 

% \rvs{HyP-DESPOT uses two types of trials to assist parallelization---\textit{optimistic trials} that use the heuristics in Eqn. \eqref{eqn:hypdespot_action_heuristics}, and \textit{explorative trials} that append additional \textit{exploration bonuses} on Eqn. \eqref{eqn:hypdespot_action_heuristics} to encourage exploration.} As proven in \cite{Hyp-despot}, the optimistic trials always guarantee the convergence of the parallel tree search and the near-optimality of decisions at convergence, regardless of the choice of exploration bonuses in explorative trials. The \algname planner leverages this flexibility of explorative trials to incorporate learned heuristics, presented next.

\subsection{Incorporating Learned Heuristics} \label{sec:use_priors}
The \algname planner extends HyP-DESPOT, incorporating the learned policy and values in the heuristics
% of \textit{explorative trials}
to perform guided belief tree search. It uses the policy network to guide forward traversal, and uses the value network to approximate long-term returns.
% This idea is summarized as \textit{``plan locally and learn globally''}. The locality of search reduces the complexity of search. In the meantime, the learned heuristics improve the quality of planning under real-time constraints. 
% The locality of search makes our planner exponentially more efficient than HyP-DESPOT.
\figref{fig:overview}a briefly illustrates the guided belief tree search algorithm. 
% Algorithmic details are presented below.

% \rvs{A trial in the \algname planner contains the same phases as in HyP-DESPOT: traversal, expansion, initialization, and backup.}
The policy network is queried at each node along the exploration path to provide \textit{prior probabilities} over actions \figrefb{fig:overview}{a-policy}. The probabilities are used to bias action selection. Specifically, a trial visiting a node $b$ selects an action branch to traverse using a UCB-like heuristics:
\begin{equation}
a^*=\argmax_{a\in A} \left \{ u(b,a) + c \;\pi_{\theta}(a | x_b)\sqrt{\frac{ N(b)}{N(b,a)+1}} \right \}. 
\label{Eqn:select_ub}
\end{equation}
\noindent
The improvement of Eqn. \eqref{Eqn:select_ub} over Eqn. \eqref{eqn:hypdespot_action_heuristics} is the additional exploration bonus weighted by a constant factor $c$. $\pi_{\theta}(\cdot| x_b)$ denotes the prior probabilities output by the policy network $\pi_{\theta}$ at the history state $x_b$, a 4-step history at $b$ encoded as images (see \secref{sec:nn_architecture}). 
It prioritizes actions suggested by the learned policy.
% Using the prior probabilities, the augmented heuristics favor actions suggested by the learned policy, apart from those with high optimistic outcomes. 
The bonus further depends on the visitation count of $b$, \ie, the number of times \node has been visited by previous trials, denoted as \nvisit{\node}, and the visitation count of its child action branch, \nvisita{\node}{\act}. 
This encourages exploration.
When visiting a node the first few times, upper bound estimates of actions, $u(b,\cdot)$, are often uninformative. The action selection is therefore strongly biased by the learned policy, with a desirable level of exploration ensured by the visitation count term. After sufficient search, the difference of upper bounds gradually dominates the heuristics, making it behave more similar to Eqn. \eqref{eqn:hypdespot_action_heuristics}. The observation selection heuristics remain the same as HyP-DESPOT.

The value network is queried at each leaf node to 
% provide a \textit{prior value}, which  serves as 
an initial value estimate of the node \figrefb{fig:overview}{a-value}:
\begin{equation} \label{eqn:letsdrive_value}
\hat{v}_0(\node)=\min\bigl(\max\bigl(l_0(\node), v_{\theta'}(x_\node)\bigr), u_0(\node)\bigr),
\end{equation}
% \noindent
where \node is a new leaf node, and $v_{\theta'}(x_b)$ is the \textit{prior value} predicted by the value network $v_{\theta'}$ at the history state $x_b$ (see \secref{sec:nn_architecture}). The prior value has been learned from past experience, providing accurate value estimates that can otherwise only be acquired by searching the corresponding sub-tree sufficiently.
Eqn. \eqref{eqn:letsdrive_value} further performs \textit{value clipping} to regulate the prior values. For a leaf node \node, it clips the prior value, $v_{\theta'}(x_b)$, using the initial MC bounds, $l_0(b)$ and $u_0(b)$, to produce an initial learned value, $\hat{v}_0(b)$.
The value clipping guarantees the correctness of learned values, ensuring a relationship of $l_0(b) \le \hat{v}_0(b) \le u_0(b)$, which helps maintain theoretical guarantees of planning. See theoretical details in \secref{sec:convergence}.

% \rvs{\algname ends exploration paths at a shortened search horizon $D$ ($D<H$). 
During backup, we update both the learned value and the MC value estimates of belief nodes, using the Bellman's operator (Eqn. \ref{eqn:bellman1} and \ref{eqn:bellman2}).
% , updating the values of parent nodes:
% \rvs{\begin{eqnarray}
%     \hat{V}(\node) &=& \max_{\act \in A} \left\{
% 	\rfun{\node}{\act}+\gamma
% 	\sum_{\obs \in \obsset_{\node,\act}}p(\obs|\node, \act)\hat{V}(\newnode)\right\}\label{eqn:letsdrive_backup_ub}
% \end{eqnarray}}
% % \normalsize
% \vspace{-0.2cm}
% \noindent where \node is a node along the backup path and \newnode is a child of \node; $\hat{V}(\node)$ denotes both the learned value, $\hat{v}(\node)$, and the MC bounds, \ub{\node} and \lb{\node}.
% Eqn. \eqref{eqn:letsdrive_backup_ub} approximate the Bellman's operator \eqrefb{eqn:backup} by considering only the sampled observation set, $\obsset_{\node,\act}$, generated by scenarios in \scenariosetnode{\node}.
% For the MC bounds, it is the same backup operator as in HyP-DESPOT.
% The first term of Eqn. \eqref{eqn:letsdrive_backup_ub} considers the expected immediate reward. The second term backs up long-term value estimates from child nodes, where each child is produced by a sampled observation, $\obs \in \obsset_{\node,\act}$.
Since the operator only applies linear operations and maximization, it guarantees the same relationship, $l(b) \le \hat{v}(b) \le u(b)$, to hold for all belief nodes throughout the tree search.

When the tree search terminates, \algname reports the action with the best \textit{learned value} at the root, which provides the best long-term outcome, estimated by the search tree.
% \rvs{
% \begin{eqnarray}
    % \hat{\optact} &=& \argmax_{\act \in A} \hat{v}(\rootnode, \act),\label{eqn:letsdrive_decision}
% \end{eqnarray}}
% \vspace{-0.2cm}
% 
% \noindent where $\hat{v}(\rootnode, \act)$ is the learned value of applying action \act at \rootnode, which is constantly updated during the search through value backups. The optimal action $\hat{\optact}$ thus provides the best estimated long-term outcome.
% \rvs{To enable real-time planning, we limit the maximum tree depth to a small number $D$ ($D<H$), within which the search iteratively updates node values via Eqn. \eqref{eqn:letsdrive_backup_ub}, and beyond which no explicit search happens---future outcomes are sorely approximated by the learned values. The idea is summarized as \textit{``plan locally and learn globally''}. The locality of search reduces the complexity of online planning to $O(|A|^DK)$. In the meantime, the learned values improve the solution quality under real-time constraints. When $D\ll H$, \algname becomes exponentially more efficient than HyP-DESPOT. 

\begin{figure*} [!t]
\centering
\includegraphics[width=0.9\textwidth]{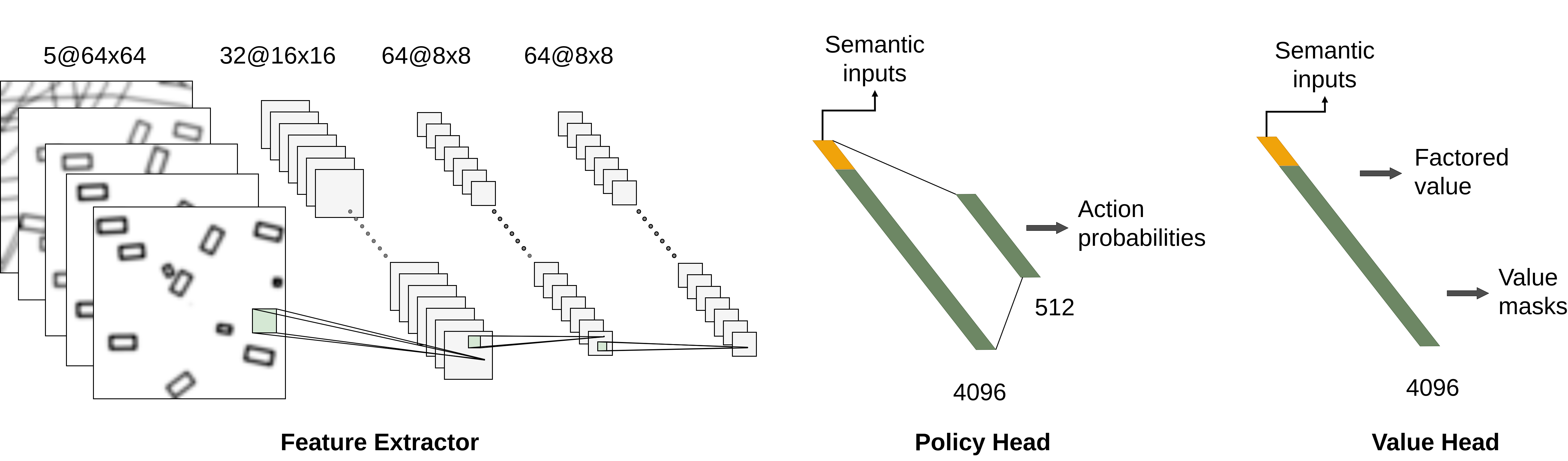}
	\caption{Neural network architectures of the policy and value networks.}
	\label{fig:architectures}
\end{figure*}

\subsection{Performance Guarantee} \label{sec:convergence}
The following theorem analyzes the \textit{convergence} of the uncertainty gap at the root \rootnode, measured by the difference between the upper and lower bound estimates, $\gapsymbol(\rootnode) = u(\rootnode)-l(\rootnode)$, and discusses the \textit{regret bound} of the reported optimal policy $\optdespotpolicy$ at convergence: 
\begin{theorem} \label{convergence}
    The proposed belief tree search algorithm using learned heuristics is probabilistically and asymptotically optimal. Suppose that the maximum planning time is unbounded. The uncertainty gap at the root, $\gapsymbol(\rootnode)$, will converge to zero in finite time. Let $\optdespotpolicy$ denote the policy tree reported by the algorithm at convergence. Given any constant $\tau \in (0,1)$, the following relationship holds for the value of $\optdespotpolicy$ and the value of the true optimal policy $\optpolicy$, with probability at least $1-\tau$:
    \begin{equation}\label{eqn:despot_error}
    V_{\optdespotpolicy}(\rootnode) \ge  V_{\optpolicy}(\rootnode) - \regretsymbol_{\optpolicy, \tau}(K).
    \end{equation}
The approximation error $\regretsymbol_{\optpolicy,\tau}(K)$ is the same regret bound stated in Theorem 3.2 of \cite{DESPOT}, which approaches zero when $K\rightarrow \infty$, at a rate of $O(\frac{1}{\sqrt{K}})$, where $K$ is the number of sampled scenarios.
\end{theorem}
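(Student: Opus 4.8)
The plan is to prove the theorem as a reduction to Theorem~3.2 of \cite{DESPOT}. The crucial structural fact is that our two modifications---the policy-guided action rule \eqref{Eqn:select_ub} and the value clipping \eqref{eqn:letsdrive_value}---never enter the computation of the upper and lower bounds \ub{\node} and \lb{\node}: these are still the Monte Carlo estimates backed up by the Bellman operator \eqref{eqn:bellman1}--\eqref{eqn:bellman2} exactly as in HyP-DESPOT, while the learned value $\hat{v}$ is an auxiliary quantity used only for the final report. Hence the whole bound machinery, and in particular the sampling argument that produces \regretsymbol, is inherited verbatim provided we establish two things: (i) the learned value stays sandwiched, $\lb{\node}\le\hat{v}(\node)\le\ub{\node}$, at every node and action branch, and (ii) at convergence the action of maximal learned value coincides with the DESPOT-optimal action.

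For (i) I would argue by induction on backups. At a freshly expanded leaf the $\min/\max$ form of \eqref{eqn:letsdrive_value} forces $\inilb{\node}\le\hat{v}_0(\node)\le\iniub{\node}$ directly. For an internal node or action branch, the backup \eqref{eqn:bellman2} applies the same non-negative linear combination to $l$, $\hat{v}$, and $u$, followed by the maximization in \eqref{eqn:bellman1}; since both operations are monotone, a sandwich holding at the children is carried to the parent. Thus $\lb{\node}\le\hat{v}(\node)\le\ub{\node}$ holds throughout the search.

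Next I would show $\gap{\rootnode}\to 0$ in finite time. Because \ub{} and \lb{} evolve exactly as in HyP-DESPOT, the only possible interference is the exploration order imposed by \eqref{Eqn:select_ub}. The key point is that the bonus is non-negative and, for any under-explored action with positive prior, grows like $\sqrt{\nvisit{\node}}$ as trials accumulate at \node, so no such action is permanently starved; for an action whose prior vanishes the rule degenerates to the upper-bound-greedy selection \eqref{eqn:hypdespot_action_heuristics} that already drives DESPOT's convergence. Since the sampled belief tree is finite (depth at most $H$, at most $K$ observation branches per node), every branch required to close the root gap is expanded after finitely many trials, giving finite-time convergence as in \cite{DESPOT}. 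For (ii), when $\gap{\rootnode}=0$ we have $\ub{\rootnode}=\lb{\rootnode}$; letting $a^*$ attain the best lower bound, optimism gives $u(\rootnode,a')\le\ub{\rootnode}=l(\rootnode,a^*)$ for every other action $a'$. Combined with the action-level sandwich this yields $\hat{v}(\rootnode,a')\le u(\rootnode,a')\le l(\rootnode,a^*)\le\hat{v}(\rootnode,a^*)$, so the highest-learned-value action reported by \algname is exactly $a^*$, the action prescribed by \optdespotpolicy. The bound \eqref{eqn:despot_error} then follows word for word from Theorem~3.2 of \cite{DESPOT}, which compares \optdespotpolicy with \optpolicy under the same $K$ scenarios and is independent of how the tree was searched.

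The main obstacle I anticipate lies in the convergence argument, specifically in excluding the degenerate scenario in which the bonus keeps an already-converged action repeatedly selected while an unexplored sibling with a loose upper bound is never revisited. I would resolve this by observing that every trial through \node still increments \nvisit{\node}, so the bonus of any unvisited sibling with positive prior diverges and forces its eventual selection, while the blocking of converged subtrees inherited from HyP-DESPOT prevents trials from being wasted indefinitely. Conceptually, the most delicate point is step (ii): it is precisely the clipping to $[\,\inilb{\node},\iniub{\node}\,]$ that prevents an approximately learned---and possibly badly miscalibrated---value from overriding the tight bounds and corrupting the final decision, which is why the clipping, rather than the raw network output, is essential to the guarantee.
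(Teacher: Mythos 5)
Your steps (i) and (ii) --- the sandwich $l(\node)\le\hat v(\node)\le u(\node)$ preserved by clipping and by the monotone Bellman backups, and the identification of the reported max-learned-value action with the DESPOT-optimal action once $\gapsymbol(\rootnode)=0$ --- match the paper's (much terser) reasoning and correctly isolate why clipping, rather than the raw network output, is what keeps the guarantee intact. The final reduction to Theorem~3.2 of \cite{DESPOT} is also the same.

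The genuine gap is in your convergence argument. The paper does not attempt to show that the guided rule \eqref{Eqn:select_ub} closes the root gap on its own; it invokes an algorithmic ingredient revealed only in the proof: the planner \emph{periodically launches optimistic trials} that use the unbiased heuristic \eqref{eqn:hypdespot_action_heuristics}, and Theorem~1 of \cite{Hyp-despot} guarantees that these trials alone drive $\gapsymbol(\rootnode)$ monotonically to zero in finitely many trials, \emph{regardless} of what the remaining (guided) trials do. Your bandit-style substitute does not close. For an action $a$ with $\pi_{\theta}(a|x_b)=0$ the bonus vanishes, but the rule does not ``degenerate to upper-bound-greedy'': every competing action $a'$ keeps a bonus bounded below by roughly $c\,\pi_{\theta}(a'|x_b)$, since $N(b,a')\le N(b)$ makes the ratio under the square root at least $N(b)/(N(b)+1)$. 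Hence a zero-prior (or small-prior, relative to $c$) action whose loose initial upper bound determines $u(b)$ can be starved forever, and the root gap need not reach zero. Even granting strictly positive priors, your ``every action is eventually selected'' argument still has to contend with HyP-DESPOT's trial-termination criterion and observation-selection rule, which you do not address; the periodic optimistic trials are precisely what lets the paper bypass all of this.
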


\begin{proof}
% The proposed belief tree search algorithm has only changed the heuristics of explorative trials, thus uses \textit{exactly the same} heuristics as HyP-DESPOT for \textit{optimistic trials}. 
Convergence of the search is guaranteed by \textit{optimistic trails}, exploration paths that uses the unbiased heuristics \eqrefb{eqn:hypdespot_action_heuristics}, which is brought forward from HyP-DESPOT \cite{Hyp-despot} and launched periodically in our planner.
As shown in Theorem 1 of \cite{Hyp-despot}, optimistic trials always guarantee the uncertainty gap at the tree root, $\gapsymbol(\rootnode) = u(\rootnode)-l(\rootnode)$, to monotonically decrease and converge to zero with a finite number of trials, and the guarantee holds regardless of what exploration mechanism is deployed in other trials, in our case, Eqn. \eqref{Eqn:select_ub}. 
This means the proposed algorithm always converges in finite time. 
Since the learned value at the root node \rootnode is bounded between $l(\rootnode)$ and $u(\rootnode)$, it will converge to the optimal value under the sampled scenarios. Namely, our planner will report the same optimal policy as HyP-DESPOT when both algorithms converge. %The difference is that our planner improves the real-time performance by leveraging learned heuristics in explorative trials.

Further, as shown in Theorem 3.2 of \cite{DESPOT}, given any constant $\tau \in (0,1)$, the regret induced by the optimal HyP-DESPOT policy, $\optdespotpolicy$, which  considers only  the sampled scenarios, with respect to the true optimal policy, $\optpolicy$, that considers all possible scenarios, is bounded by $\regretsymbol_{\optpolicy,\tau}(K)$ with probability at least $1-\tau$. 
% In Eqn. \eqref{eqn:despot_error}, we have abbreviated the regret bound as $\regretsymbol_{\optpolicy,\tau}(K)$. 
For a given POMDP problem, the regret bound, $\regretsymbol_{\optpolicy,\tau}(K)$, is determined by $K$, the number of sampled scenarios. When increasing $K$ to infinity, the bound converges to zero at an asymptotic rate of $O(\frac{1}{\sqrt{K}})$. The regret bound also decreases with the size of the optimal policy tree $\optpolicy$, meaning that if a simple near-optimal policy exists, the value approximation will be particularly tight. We refer readers to Theorem 3.2 of \cite{DESPOT} for the detailed expression of Eqn. \eqref{eqn:despot_error} and a rigorous proof of the bound.

Finally, since our planner reports the same policy as HyP-DESPOT at convergence, it provides the same regret bound. This concludes our proof that our planner is probabilistically and asymptotically optimal. 
\end{proof}

\section{Planning-Informed Learning}  \label{sec:learning}
The learner in \algname uses the planner's experiences to update the policy network and the value network. 
This section will discuss three possible designs of learners---\textit{open-loop self-supervision}, \textit{closed-loop self-supervision}, and \textit{closed-loop reinforcement}. The learners share the same planner counterpart, thus also correspond to three \algname variants.
In the following, we will first present the neural network architectures, and the collection of learning experience, then present the three learner variants' core ideas and algorithmic details.

\subsection{Policy and Value Networks} \label{sec:nn_architecture}

The architectures of the policy and value networks are shown in \figref{fig:architectures} and described below.
Input to the policy and value networks are top-down rasterized images encoding the state history $x_b$ at a belief $b$. The input consists of 5 channels. Channel $1-4$ encode the geometry of traffic agents at the current and three past frames; The $5th$ channel encodes the lane graph of the urban map drawn as a set of poly-lines. All images are registered to the local view of the ego-vehicle for the corresponding time step. They are initially rendered as $1024\times1024$ images and down-sampled to $64\times64$ using Gaussian pyramids \cite{adelson1984pyramid} before inputting to the neural networks.

The policy and value networks use a feature extractor similar to that in \cite{dqn}. The input images are processed by three convolutional layers: an input layer with 32 $8\times 8$ kernels with stride $4$ and no padding; a middle layer with 64 $4\times 4$ kernels with stride $2$ and no padding; and the last layer with 64 $3\times 3$ kernels with stride $1$ and no padding. The extractor outputs 64 $8\times 8$ images as hidden features. These features are flattened and concatenated with the semantic inputs, \ie, velocities of the ego-vehicle in the past four frames, and fed to the heads.

Our policy network only has one categorical head to output the distribution over nine possible lane-decision / acceleration combinations. The policy head has two fully-connected (FC) layers mapping from the raw feature vector of length $4096$ to an intermediate feature vector of length $512$, then to $9$ action probabilities. The value network, instead, has two heads corresponding to the factored value function (Appendix \ref{sec:factored_reward}). They include a mask head to output two binary masks for the safe-driving and collision value factors, and a value head to predict the non-zero numbers for the value factors. Both heads have a single FC layer directly mapping the raw features to factored predictions, which are combined to recover the actual value prediction.

\subsection{Data Collection}\label{sec:mixed_experience}
Data for learning are collected by the planner, through acting in the environment.
% An episode of experience consists of a few minutes of continuous driving and ends when the vehicle exits the range of the map.
% In simulation, multiple ``actors'' can execute asynchronously in separate simulator instances, to collaboratively collect data.
In each episode, the actor records a trajectory. 
Each data point along the trajectory is represented as $(b,a,r,a^*, v^*)$, where $b$ is the belief at the corresponding time step, $a$ is the executed action, and $r$ is the reward fed back by the environment after executing $a$; the data point also records the planner's estimation of the optimal action, $a^*$, and the optimal value, $v^*$, at $b$.
% A belief $b$ is approximated by the most recent 4-step history (see \secref{sec:nn_architecture}).
The action-value labels enable self-supervised learning (Sections \ref{sec:ol_learner} and \ref{sec:cl_spv_learner}). 
The rewards enable reinforcement learning \secrefb{sec:cl_rnf_learner}.
\rvs{The planner actor executes different $a$ for different learners. For self-supervised learning, we directly execute the optimal action ($a=a^*$); for reinforcement learning, we use a combination of exploitative actors ($a=a^*$), explorative actors (sample $a$ according to estimated action-values), and on-policy actors (execute the learner policy) to collect experience.}
Collected trajectories are processed into a pool of data points, either stored in an offline dataset or fed to a fixed-capacity replay buffer, for offline and online learners, respectively. 
Multiple actors can execute asynchronously in separate simulator instances, to collaboratively collect data.

\subsection{Learners}
Now we introduce the learner that uses experiences from the planner to optimize the policy and value networks. We propose the following learner variants, covering both open-loop and close-loop integration of planning and learning, and leveraging both self-supervised and reinforcement learning:

\subsubsection{Open-loop self-supervised learning (\olimitation)} \label{sec:ol_learner}
In \olimitation, the integration of planning and learning happens in two phases: offline supervised learning and online guided planning.
In the offline phase, \olimitation learns from a fixed planning expert;
In the online phase, it plans with learned heuristics.
No further data is fed back to the learner during the online stage. The planner and the learner are thus integrated in an ``open-loop''. This algorithm replicates the idea of our earlier work \cite{lets-drive} using the new planner \rvs{with value clipping} \secrefb{sec:planning} and the new \rvs{POMDP and neural network} models (\secref{sec:pomdp} and \ref{sec:nn_architecture}).

Specifically, \olimitation uses HyP-DESPOT \cite{Hyp-despot} as the actor to collect an offline dataset. 
Then, the learner trains the neural networks by sampling the dataset. For each sampled belief $b$, it fits the policy network to the action label, $\optact$, and fits the value network to the value label, $v^*$, both provided by HyP-DESPOT. It uses cross-entropy loss (CEL) for policy predictions and mean-square errors (MSE) for value predictions. 
To facilitate value learning, we have further decomposed the value loss into safe-driving and collision factors following the factorization of the planner's value function (\secref{sec:factorization}). Details of the loss functions are explained in Appendix~\ref{sec:spv_loss}.

At execution time, \olimitation performs guided belief tree search to synthesize real-time driving policies (\secref{sec:use_priors}). 
\olimitation thus benefits from both local planning and global learning. 
However, the limitation is that it cannot leverage new data generated by the stronger, guided planner.

\subsubsection{Closed-loop self-supervised learning (\climitation)} \label{sec:cl_spv_learner}
\climitation improves over \olimitation by letting the learner receive online experiences from the \textit{guided} planner and constantly feed updated heuristics back to the planner, thus closing the planning-learning loop as shown in \figref{fig:overview}.

% \climitation executes planning and learning in a closed-loop, as shown in \figref{fig:overview}.
% The algorithm comprises multiple asynchronous planner actors and a learner. 
% During training, actors perform guided belief tree search with the latest policy and value networks \secrefb{sec:use_priors} to act in the environment and collect labeled trajectories. 
\climitation uses the guided planner \secrefb{sec:planning} as the actor to collect data for learning.
The planner uses the latest policy and value network as heuristics.
% The planner and the learner run concurrently.
In each episode, the planner collects a trajectory and feeds it to a fixed-capacity replay buffer. 
% It provides two supervision labels for each belief $b$ along the trajectory, including the optimal action, $\optact$, and the optimal value estimate, $v^*$  \rvs{\figrefb{fig:overview}{d}}.
In the meantime, the learner repeatedly samples data from the replay buffer. 
% \rvs{It uses action-value labels provided by the guided planner to optimize the neural networks \figrefb{fig:overview}{d}}. 
For each sampled belief $b$, it fits the policy network to the action label, $a^*$, and the value network to the value label, $v^*$, both provided by the guided planner \figrefb{fig:overview}{d}. Policy and value learning also uses CEL and MSE losses, respectively, where the value loss is factorized. After every few updates, it feeds the new heuristics back to the planner through a shared buffer. \climitation learns from scratch, starting from randomly initialized policy and value networks and an empty replay buffer. 
% Training is executed until reaching a given limit on training data. 
% It fits the neural networks to the behavior of the constantly evolving planner. 
Stable training is achieved with the help of entropy regularization\rvs{, which enforces a desired level of entropy for the learned policies.} Details of the loss function and entropy regularization are explained in Appendix~\ref{sec:spv_loss}.

\climitation is essentially a form of \textit{self-supervised} learning: the planner provides labels to train its own sub-components (the heuristics). Sample efficiency is achieved by using structured rewards (compiled as values) from the planner as learning signals, instead of unstructured (raw) rewards from the environment. 

\climitation can also be viewed as generalized policy iteration \cite{DecisionBook}: the belief tree search performs policy improvement over the current policy; the learner then updates itself to fit the improved policy. By iterating these two steps, the planner and the learner can together converge to optimal policies defined w.r.t the POMDP model.

The POMDP model is, however, an imperfect approximation to the actual environment. 
Since \climitation~plans using the model and learns from policies and values generated with the model, it can be sensitive to model errors (even though we observe it working well in practice).

\subsubsection{Closed-loop reinforcement learning (\clreinforce)} \label{sec:cl_rnf_learner}
\clreinforce is thus proposed to hedge against model errors.
\clreinforce additionally uses policy gradient \cite{A3C,DDPG,SAC} to let the policy network receive and learn from reward feedback from the \textit{actual environment}. By doing so, the learner policy is optimized w.r.t. the true environment dynamics. 

\clreinforce shares the same closed-loop architecture and value learner as \climitation. 
Differently, the policy learner is not supervised by the planner's actions. Instead, it learns from the rewards fed back by the environment \figrefb{fig:overview}{e}.
% Differently, \clreinforce does not regard the guided planner as the expert policy, but as an \textit{exploration policy} to explore transitions, $(b,\optact, b')$'s, and reward signals, $r$'s, from the environment \rvs{\figrefb{fig:overview}{e}}. 
% Using such data, 
At each sampled belief $b$, \clreinforce reinforces the learner policy by estimating its expected value from raw reward signals along the trajectory, and differentiating the value to compute gradients for updating the learner policy (``policy gradient''). \clreinforce thus optimizes the learner policy for its \textit{own} expected value. The policy is thus unaffected by the imperfection of planning models.  

\clreinforce essentially uses the planner as an \textit{exploration policy} for reinforcement learning. However, the explored trajectories are \textit{off-policy}, i.e., not sampled from the distribution induced by the learner policy, but from the distribution induced by the guided planner. Such trajectories lead to biased value estimates and policy gradients for the learner policy if not properly corrected.
Thus, we build our learner on top of a popular off-policy policy gradient algorithm, soft actor-critic (SAC) \cite{SAC-Discrete}, to correctly train the learner policy using the planner's experience. \rvs{Entropy regularization is also applied here to assist training.} Details of our SAC implementation are presented in Appendix \ref{sec:rnf_loss}.

\section{Experiments} \label{sec:results}
\begin{figure*} [!t]
    \centering
    \begin{tabular}{ccc}
    \hspace*{-0.5cm}
    \includegraphics[width=0.34\textwidth]{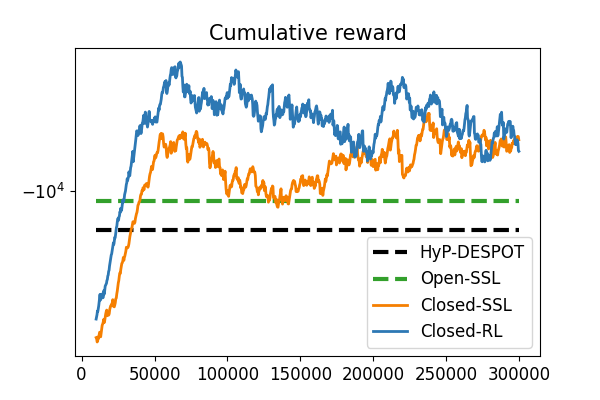} &
    \hspace*{-0.65cm}
    \includegraphics[width=0.34\textwidth]{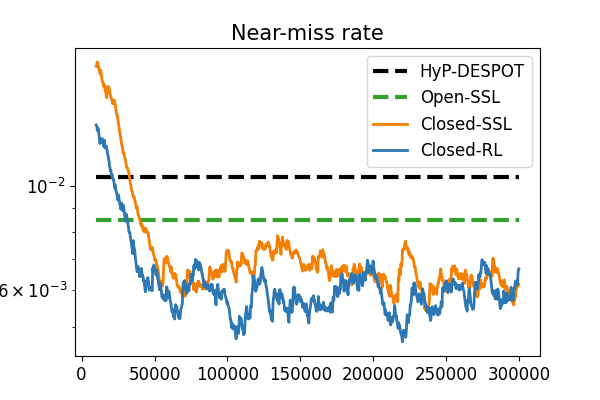} &
    \hspace*{-0.65cm}
    \includegraphics[width=0.34\textwidth]{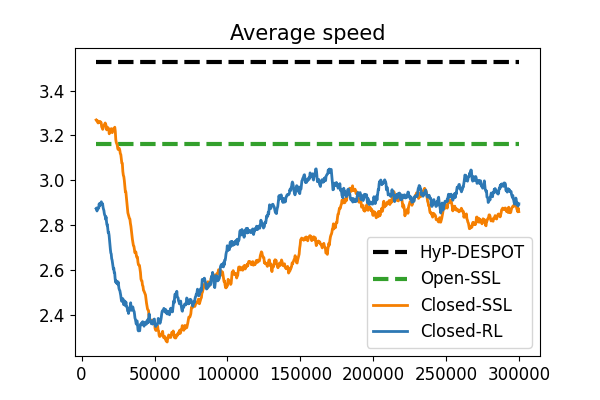}
    \vspace*{-0.2cm}
    \\
    (a)&(b)&(c)
    \end{tabular}
    \caption{Performance of planner policies in \algname compared with online POMDP planning using HyP-DESPOT. All \algname variants achieve significant improvements over POMDP planning. \climitation and \clreinforce achieve the best sample efficiency and asymptotic performance.
    % The near-miss rate calculates the portion of time steps when time-to-collision $\leq0.33s$.
    }
    \label{fig:planner_exp}
\end{figure*}

\begin{figure*} [!t]
    \centering
    \begin{tabular}{ccc}
    \hspace*{-0.5cm}
    \includegraphics[width=0.34\textwidth]{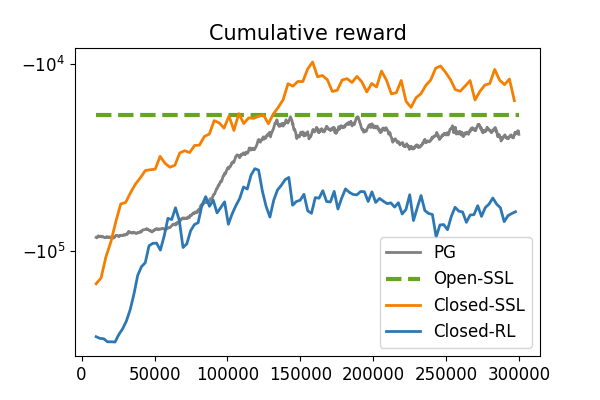} &
    \hspace*{-0.65cm}
    \includegraphics[width=0.34\textwidth]{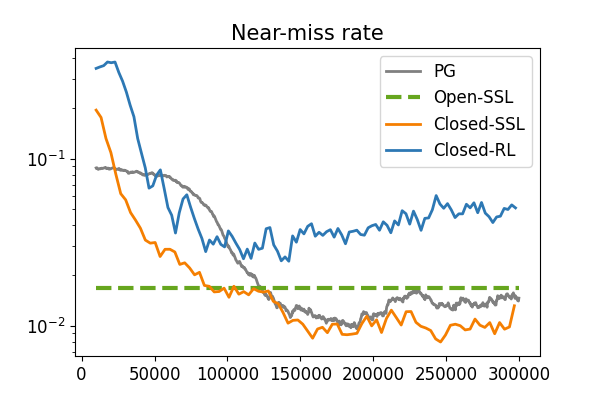} &
    \hspace*{-0.65cm}
    \includegraphics[width=0.34\textwidth]{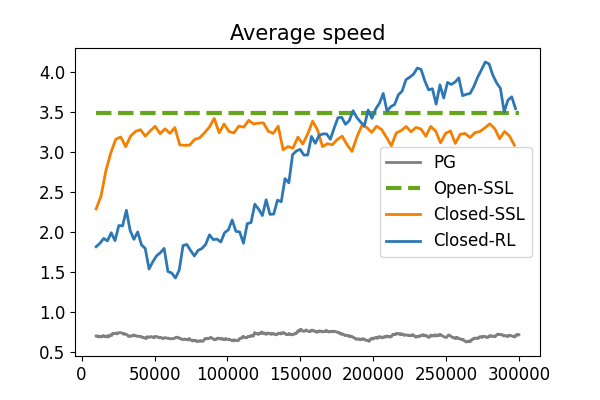} 
    \vspace*{-0.2cm}\\
    (a)&(b)&(c)
    \end{tabular}
    \caption{Performance of learner policies in \algname, compared with policy gradient (PG) and imitation learning (\olimitation). Closed-loop self-supervised learning (\climitation)  produces  the most effective learner policy.}
    \label{fig:learner_exp}
\end{figure*}

In the experiments, we compare \algname with online POMDP planning, reinforcement learning, as well as open-loop integration of planning and learning. We also provide tests on the scalability and generalization of \algname, and analyzed the importance of new algorithmic components. We tested all three variants of \algname. Among them, \textit{\climitation} and \textit{\clreinforce} are our proposed approaches, and \textit{\olimitation} serves as an open-loop integration baseline. For planning and learning baselines, we use HyP-DESPOT to calibrate the capability of existing POMDP planning tools, and use imitation learning (the policy learner in \olimitation) and policy gradient using SAC \cite{SAC-Discrete} (labeled as \textit{\sac}) to calibrate the capability of stand-alone policy learning. By comparing \algname with the existing algorithms that it is developed on top of, we perform controlled experiments to clearly show the benefit of integrating planning and learning for tackling short planning time and limited data.

Our results show that the integration of planning and learning enables \algname to largely advance the capability of both, greatly improving the scalability of online planning and the efficiency of learning. Closed-loop planning and learning further improves the sample efficiency and asymptotic performance by a large margin.
When using self-supervised learning, \algname produces
the strongest learner policies; when additionally using reinforcement learning, \algname achieves the best integrated performance. Value clipping applied in the search not only ensures theoretical guarantees, but also improves the practical performance of \algname. After training, our planner can successfully drive a vehicle through dense urban crowds with sophisticated combinations of accelerations and maneuvers, and generalize to significantly different environments. See the example driving clips in the accompanying video or via this link: \href{www.dropbox.com/s/n8t5dxo1i295smy/tro-lets-drive.mp4?dl=0}{www.dropbox.com/s/n8t5dxo1i295smy/tro-lets-drive.mp4?dl=0}. 

\subsection{Experimental Setup}

We analyze our approach in SUMMIT \cite{SUMMIT}, a real-time simulator for massive mixed urban traffic. Given any worldwide location supported by the \href{www.openstreetmap.org}{OpenStreetMap} \cite{OSM}, SUMMIT automatically generates dense traffic on the map. It controls exo-agents using GAMMA~\cite{luo2022gamma}, a recent traffic motion model validated on multiple real-world datasets. The model uses velocity-obstacle-based optimization to perform local collision avoidance. It is efficient, able to support real-time simulation of many traffic agents.
We train \algname using random crowds at the Meskel-Square intersection at Addis Ababa, Ethiopia (\figref{fig:meskel}) simulated in SUMMIT. Each instance of urban crowd contains 110 active traffic agents, including trucks, buses, cars, motorcycles, pedestrians, \etc. 
An episode of experience consists of a few minutes of continuous driving and ends when the vehicle exits the range of the map.
% In each episode, the ego-vehicle is granted a maximum of 2 minutes to cross the intersection.
% \rvs{The task is to control an ego-vehicle to cross the intersection. The vehicle can choose any route to proceed, but has to conform to the road structure, \ie, lane directions and lane connectivity. In each episode, the vehicle is granted a maximum of 2 minutes to finish the task.} The intersection is not properly controlled or regulated by traffic lights or rules. All agents drive aggressively, \eg, constantly cutting through the way of and overtaking the ego-vehicle. The ego-vehicle has to account for a large-scale, interactive, and highly dynamic environment. 

We measure the performance of planning and learning algorithms using the average cumulative reward achieved over episodes. We further measure the driving safety and efficiency to provide a detailed view of performance. Driving safety is measured as the near-miss rate over all time steps. A near-miss is a close encounter event when the estimated time-to-collision is shorter than a threshold, here set as $0.33s$, which is a more robust hazard indicator than collisions \cite{nearmiss}. Driving efficiency is characterized by the average driving speed, measured in $m/s$.

We use the same planning setup for all planners and the same learning setup for all online learners.
A POMDP state tracks a maximum of 20 exo-agents within 50 meters of the ego-vehicle, corresponding to the rough attention range of human drivers in dense traffic.
Planners are allowed $0.3$ seconds of maximum planning time at each step and execute at a rate of $3Hz$, which roughly reflects the human response time.
All learners use $3\times 10^5$ data points. Training of \climitation, \clreinforce, and \sac starts with an empty replay buffer, and ends after receiving $3\times 10^5$ unique data points. Since the time for collecting a data point in real-time simulation is fixed, the setup also leads to the same learning time, which is approximately 20 hours when using three concurrent actors and one learner on a single machine\footnote{
The machine is a server with 4 RTX 2080 GPUs, an Intel(R) Core(TM) i7-8750H CPU, and 256G RAM. Each of the actors and the learner uses one GPU to query or train neural networks.}. The \olimitation~baseline consumes an offline dataset of size $3\times 10^5$ and is trained till convergence. The reinforcement learner in \clreinforce uses the same network architectures as \sac. 
Policy and value networks in all \algname variants share the same network architectures.

\subsection{Planner Policies} \label{sec:exp_planners}

\begin{figure*} [!t]
    \centering
    \begin{tabular}{ccc}
    \hspace*{-0.4cm}
    \includegraphics[width=0.3\textwidth]{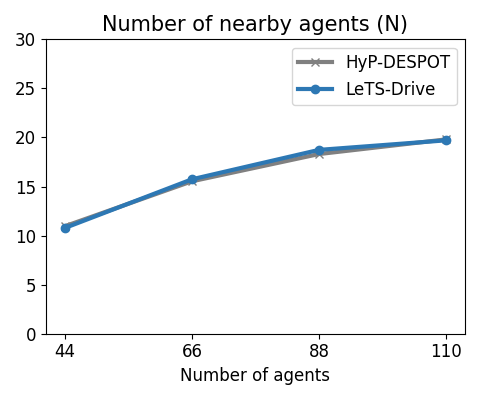} & 
    \hspace*{-0.65cm}
    \includegraphics[width=0.3\textwidth]{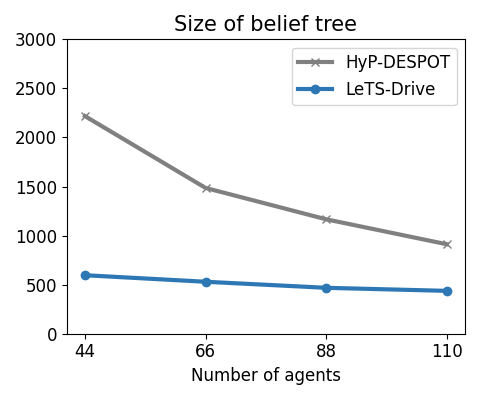} & 
    \hspace*{-0.65cm}
    \includegraphics[width=0.3\textwidth]{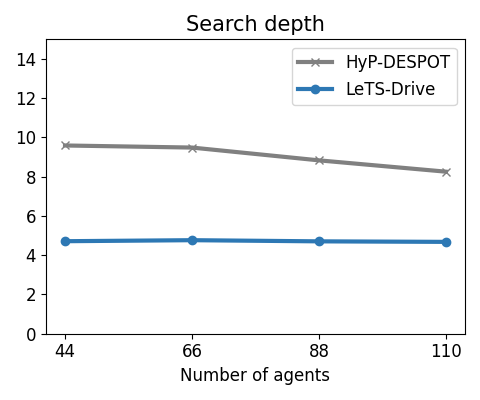}
    \vspace*{-0.2cm} \\
    (a)&(b)&(c)\\
    \hspace*{-0.4cm}
    \includegraphics[width=0.3\textwidth]{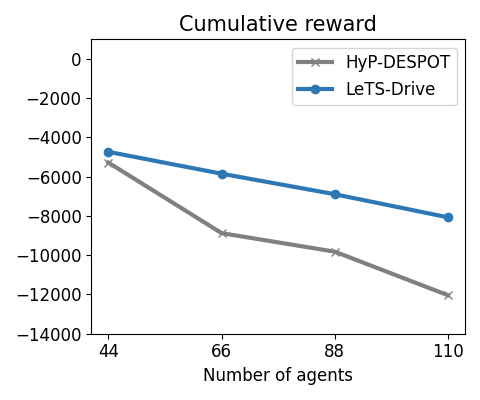} &
    \hspace*{-0.65cm}
    \includegraphics[width=0.3\textwidth]{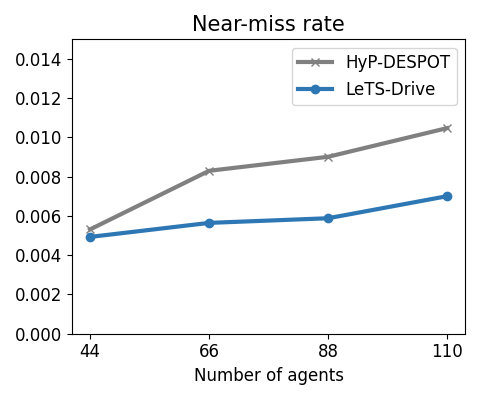} &
    \hspace*{-0.65cm}
    \includegraphics[width=0.3\textwidth]{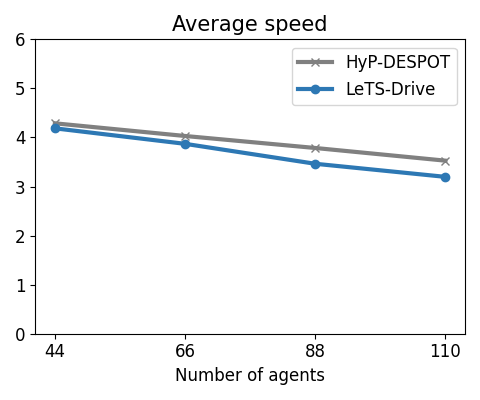} 
    \vspace*{-0.2cm}\\
    (d)&(e)&(f)
    \end{tabular}
   
    \caption{Scalability of \algname planner (\clreinforce) with  increasing number of agents in the crowd, compared with standard POMDP planning using HyP-DESPOT).}
    \label{fig:scalabilty_exp}
\end{figure*}

\figref{fig:planner_exp} shows the learning curves of the \textit{planner policies} of \algname and the performance of online POMDP planning using HyP-DESPOT. 
The learning curves are generated by periodically evaluating the planners in SUMMIT throughout training, averaged over five random seeds.
The main observations are as follows.

The integration with learned heuristics immediately brings significant performance gain over HyP-DESPOT, even when using the open-loop architecture (\olimitation). The resulting planner policy conducts more cautious driving, leading to fewer near-misses.

By closing the planning-learning loop using self-supervision (\climitation), \algname achieves superior sample efficiency, outperforming open-loop integration with around one-tenth of data, 
% corresponding to only $2\sim 3$ hours of training, 
and achieving much higher asymptotic performance. The resulting planner policy further reduces the near-miss rate by a large margin.

Switching to the reinforcement learner (\clreinforce) further improves the integrated performance, as the algorithm additionally receives feedback from the actual environment. \clreinforce~brings the best sample efficiency, quickly achieving the highest rewards among all planners during training.

We have observed similar learning patterns from the \climitation and \clreinforce planners. Both of them first learn to reduce the near-miss rate by lowering the driving speed. Then, they gradually increase the driving speed with the near-miss rate maintained low. Both training curves have converged after receiving $1.5\times10^5\sim2\times10^5$ data points. At convergence, they deliver a similar level of planning performance. 

\subsection{Learner Policies} \label{sec:exp_learners}

\figref{fig:learner_exp} shows the learning curves of the \textit{learner policies} in \algname and stand-alone policy learning approaches. The curves are generated by periodically evaluating the policy networks in SUMMIT throughout training. 

We observe that policy gradient learners, which  do not perform  explicit reasoning, struggle to learn an effective policy for \task given the limited amount of data. This is because the task conveys three distinct local-optima behaviors: defensive driving, aggressive driving, and smart collision avoidance (desired). 
Policy gradient (\sac) acquires very conservative driving behaviors, primarily learning to reduce the near-miss rate, which is safe but inefficient. 
The learner policy of \clreinforce acquires overly-aggressive behaviors, mostly learning to increase the driving speed, which is efficient but unsafe. 
The behavioral difference  results primarily from  different experiences.
The \sac policy only learns from its own driving experiences. It  hardly foresees the benefit of gathering speed due to  frequent collision penalties.
The \clreinforce policy learns from the planner's experiences. It is incentivized to gather speed by the positive rewards received. However, with limited data, \clreinforce struggles to learn  collision avoidance sufficiently well.

In comparison, self-supervised learning (\climitation) produces smart driving policies with both low near-miss rates and desirable driving efficiency. The final learner policy has matched the performance of HyP-DESPOT, showing the effectiveness of self-supervision in policy learning.

\begin{table} [!t]
    \centering
    \caption{Generalization of trained \algname planner policies over new crowd distributions in the training map. The first and second columns show the improvement on the average cumulative reward compared to the learner policy and POMDP planning using HyP-DESPOT, respectively.}
    \label{tab:driving_results}
    \resizebox{\columnwidth}{!}{%
    \begin{tabular}{lrrrr}
    \toprule
    % & \begin{tabular}[l]{@{}l@{}}Reward w.r.t. \\ learner ($\times 10^3$) \end{tabular} 
    & \begin{tabular}[l]{@{}l@{}}Reward \\ w.r.t.  learner \\($\times 10^3$) \end{tabular} 
    % & \begin{tabular}[l]{@{}l@{}}Reward w.r.t. \\ POMDP ($\times 10^3$) \end{tabular}
    & \begin{tabular}[l]{@{}l@{}}Reward \\ w.r.t POMDP \\ ($\times 10^3$) \end{tabular} 
    & \begin{tabular}[l]{@{}l@{}}Near-miss rate\\~ \\ ~ 
    \end{tabular}
    & \begin{tabular}[l]{@{}l@{}}Average speed\\~ \\ ~ 
    \end{tabular}\\
    \midrule
    HyP-DESPOT & - & 0.00 & 0.0100 & 3.53$\pm$0.000 \\
    \olimitation
    & +8.34 & +1.54 & 0.0085 & 3.16$\pm$0.000 \\
    \climitation
    & +5.19 & \textbf{+4.03} & \textbf{0.0057} & 2.74$\pm$0.003  \\
    \clreinforce
    & \textbf{+35.97} & \textbf{+3.95} & 0.0066 & 3.00$\pm$0.005 \\
    \bottomrule
    \end{tabular}}
\end{table}

\begin{table} [!t]
    \centering
    \caption{Generalization of trained \algname planner policies over novel maps. The first and second columns show the improvement on the average cumulative reward compared to the learner policy and POMDP planning using HyP-DESPOT. 
    % \clreinforce (Retrain) shows the expected performance on the test maps.
    }
    \label{tab:driving_results_test_map}
        \resizebox{\columnwidth}{!}{%
    \begin{tabular}{lrrrr}
        % \toprule
        \toprule
        & \begin{tabular}[l]{@{}l@{}}Reward \\ w.r.t. learner\\ ($\times 10^3$)\end{tabular} 
        & \begin{tabular}[l]{@{}l@{}}Reward \\ w.r.t. POMDP \\ ($\times 10^3$) \end{tabular} 
        & \begin{tabular}[l]{@{}l@{}}Near-miss rate\\~ \\ ~ 
        \end{tabular}
        & \begin{tabular}[l]{@{}l@{}}Average speed\\~ \\ ~ 
        \end{tabular} \\
        \midrule
        HyP-DESPOT & \rvs{-} & 0.00 & 0.0081 & 3.57 $\pm$ 0.026 \\
        \olimitation
        & \rvs{+2.74} & +2.65 & 0.0057 & 3.09 $\pm$ 0.000 \\
        \climitation
        & \rvs{+13.2} & +3.81 & \textbf{0.0049} & 2.74 $\pm$ 0.016 \\ 
        \clreinforce
        & \rvs{\textbf{+46.4}} & \textbf{+4.15} & 0.0052 & 3.03 $\pm$ 0.017 \\
        \hdashline
        \rvs{\begin{tabular}[l]{@{}l@{}}\clreinforce \\ (Retrain) \end{tabular}} 
        & \rvs{+44.9} & \rvs{+4.20} & \rvs{0.0049} & \rvs{3.20} $\pm$ \rvs{0.020} \\
        \bottomrule
    \end{tabular}
    }
\end{table}

\subsection{Scalability}

We further provide in \figref{fig:scalabilty_exp} a scalability test for the \rvs{\algname(\clreinforce)} planner and compare it with the scalability of HyP-DESPOT. In the test, we gradually increase the number of agents in the Meskel Square from 44 to 110, to construct planning problems of different scales and complexities. The scalability of planners are evaluated using their capability of handling the growing problem scale in real-time. We observe \algname consistently outperforming HyP-DESPOT by searching smaller and shallower trees, when using the same planning time of $0.3s$. The performance gain increases with the problem scale.

The problem scale of \task is determined by the number of nearby agents a planner considers in a POMDP state, here denoted as $N$. Increasing $N$ leads to an exponential growth of the state and observation spaces and a quadratic growth of the complexity of the transition function. $N$ increases with the crowd size, as more agents fall within the 50-meter range of the ego-vehicle. 
\figref{fig:scalabilty_exp}{a} shows the number that planners effectively considered in the experiments. 

The growth of the problem scale leads to quickly decayed real-time performance of HyP-DESPOT, as shown by the declined reward \figrefb{fig:scalabilty_exp}{d}, increased near-miss rate \figrefb{fig:scalabilty_exp}{e}, and sacrificed driving speed \figrefb{fig:scalabilty_exp}{f}.
In comparison, \algname always generates better policies by searching much smaller and shallower trees. When the problem scale grows, the tree size and search depth of \algname remain almost unaffected \figrefb{fig:scalabilty_exp}{bc}. However, the benefit of integrating planning and learning increases. \algname consistently achieves higher cumulative rewards \figrefb{fig:scalabilty_exp}{d} and lower near-miss rates \figrefb{fig:scalabilty_exp}{e}, with a marginal compromise on the driving speed \figrefb{fig:scalabilty_exp}{f}.
The denser the scene is, the more performance gain \algname brings, showing improved scalability of planning. 

\subsection{Generalization}

\begin{figure} [!t]
    \centering
    \begin{tabular}{cc}
    \centering
        \Fbox{\includegraphics[height=0.22\textwidth]{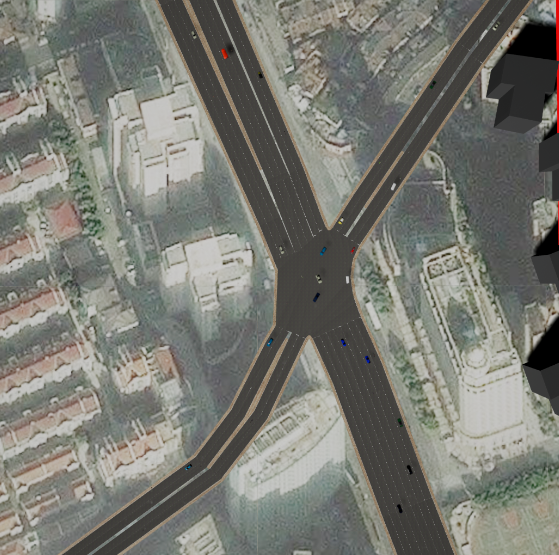}} 
        & \Fbox{\includegraphics[height=0.22\textwidth]{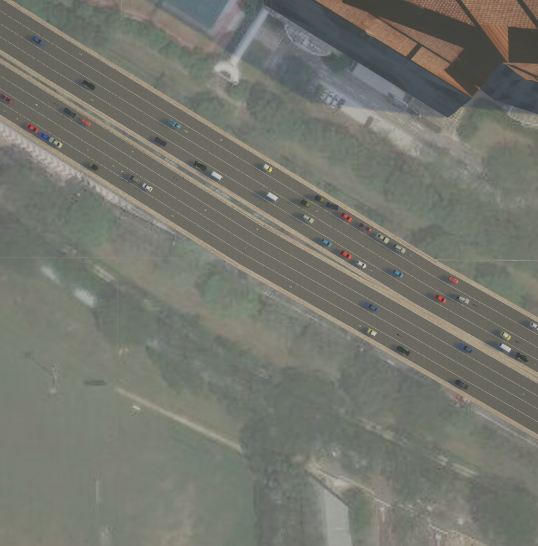}}  \\
        (a) & (b) 
    \end{tabular}
    
    \caption{Generalization over novel maps. Each map is populated with 110 traffic agents in our experiments. (a) Shanghai intersection. (b) Singapore highway.  
    }
    \label{fig:test_maps}
\end{figure}
\begin{figure*} [!t]
    \centering
    \begin{tabular}{ccc}
    \hspace*{-0.5cm}
    \includegraphics[width=0.34\textwidth]{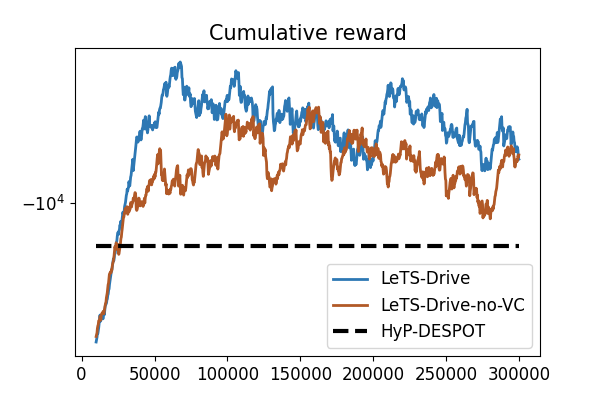} &
    \hspace*{-0.65cm}
    \includegraphics[width=0.34\textwidth]{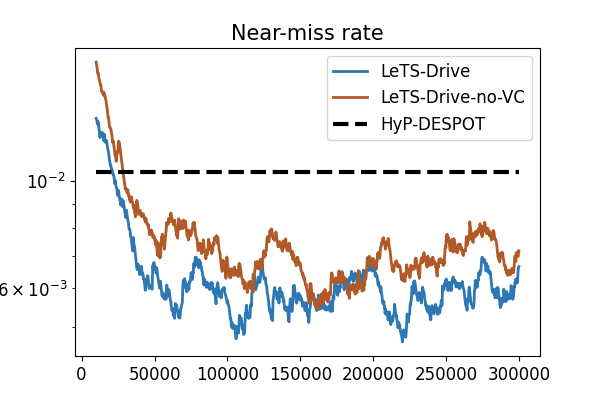} &
    \hspace*{-0.65cm}
    \includegraphics[width=0.34\textwidth]{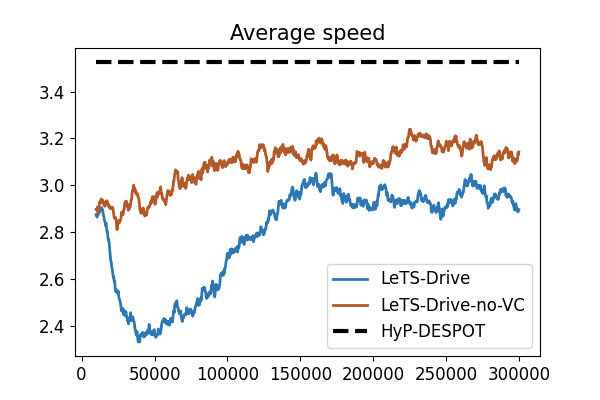} 
    \vspace*{-0.2cm}\\
    (a)&(b)&(c)
    \end{tabular}
    \caption{Performance of the \clreinforce planner policy with and without value clipping.}
    \label{fig:ablation_exp}
\end{figure*}

We now inspect the generalization of \algname. 
\subsubsection{Random test crowds}
Table \ref{tab:driving_results} shows the results for evaluating the trained planner and learner policies with unseen random crowds on the training map (Meskel intersection).
% ``reward improved from learner/POMDP'' shows the improvement on the average cumulative reward of \algname compared to the learner policy and POMDP planning using HyP-DESPOT, respectively. 
Numbers are calculated using more than $1000$ test episodes.

The results are generally consistent with those during training, clearly showing the benefits of integrating planning and learning from both directions. All \algname planner policies have drastically improved the rewards over HyP-DESPOT and their learning counterparts. 
Close-loop integration (\climitation and \clreinforce) has achieved significantly higher rewards than the open-loop (\olimitation), generating planner policies with the lowest near-miss rate and the highest rewards.

% While seeming counter-intuitive, this shows the difference between learning stand-alone policies and planning heuristics. The policy network in \clreinforce only managed to coarsely identify sensible driving behaviors and partially filter bad ones. 
\rvs{\climitation and \clreinforce allow different trade-offs between exploitation and exploration, brought by the quality and the entropy of the learner policy, respectively. \climitation learns the best with a low policy entropy, because the entropy regularization objective often contradicts the imitation objective. It thus learns a strong but less explorative policy, which confidently guides the tree search towards high-quality directions. \clreinforce learns more stably with high policy entropy, due to the characteristics of policy gradient. It produces a weaker but more explorative policy, with approximately $18\%$ higher entropy, which encourages the search to explore a wider set of sensible actions.
As a result, we observe in Table \ref{tab:driving_results} that \climitation benefits equally from explicit planning and a strong learner policy, as shown by similar reward improvements w.r.t. the learner and POMDP planning. 
In contrast, \clreinforce improves over a much weaker learner policy, achieving similar integrated performance as \climitation, with the help of exploration.}
% \rvs{ \ilimitation and \clreinforce allow different trade-offs between exploitation and exploration. \climitation learns the best with low policy entropy, producing a strong and confident policy. When used as tree search heuristics, the policy allows more exploitation and less exploration. \clreinforce learns more efficiently with high policy entropy, producing a less confident policy. When used as tree search heuristics, the policy encourages exploration over a wider set of sensible actions, thus also leading to a strong planner despite guided by a weak policy. The policy entropy we used for \clreinforce is approximately $18\%$ higher than that for \climitation, enforced using entropy regularization.}
% This is because \clreinforce learns a less confident but more explorative policy—entropy of which is approximately $18\%$ higher than that in \climitation. When used as tree search heuristics, the policy encourages exploration over a wider set of sensible actions, leading to a strong planner.

\subsubsection{Novel test maps}

 We further test \algname on two significantly different maps: another intersection in Shanghai and a highway in Singapore (\figref{fig:test_maps}). Results are shown in Table \ref{tab:driving_results_test_map}. Despite the extreme setup---training in a single intersection and testing on different maps---all variants of \algname have successfully generalized to the new environments, \rvs{almost matching the performance of \clreinforce trained specifically on the test maps, referred to as \clreinforce (Retrain) in Table \ref{tab:driving_results_test_map}}. Among them, \climitation and \clreinforce have achieved the best generalized performance. Closed-loop planning and learning brings the same level of benefits as in the training map,  delivering the safest planner policies with the highest rewards. 

\subsection{The Effect of Value Clipping}

Value clipping (\secref{sec:use_priors}) is an important algorithmic component that ensures the convergence of the guided belief tree search. We now show its practical effects.
\figref{fig:ablation_exp} shows the learning curves of \algname with and without value clipping. Without value clipping, the planner becomes overly optimistic due to the misuse of approximate heuristics. It seldom attempts to reduce the driving speed during training, thus inducing consistently higher near-miss rates. This compromises the reward throughout training.
In contrast, with value clipping, \algname becomes more cautious in driving, maintaining significantly lower speeds during the initial course of training. Afterward, the planner stably improves driving efficiency and constantly achieves higher rewards. This shows, besides maintaining theoretical guarantees, value clipping also enables more stable and efficient training in practice.

\section{Conclusion and Future Work}\label{sec:conclusion}
\rvs{We have presented the \algname algorithm, which integrates planning and learning by planning locally and learning globally in a closed loop.} \algname flexibly takes advantage of either self-supervised learning or reinforcement learning to learn heuristics for online planning. 
\rvs{Doing so, \algname  scales up online decision making under uncertainty}: it outperforms planning or learning alone, or open-loop integration of planning and learning.
Simulation experiments also show that \algname exhibits sophisticated driving behaviors
in challenging urban traffic with large heterogeneous crowds.

% \rvs{Approximation errors have a negative effect on \algname. The approximate model can make wrong predictions on the motion of exo-agents, such as failing to model the dynamics of a heavy truck. Monte Carlo sampling can also fail to sample critical scenarios, such as a driver who seems attentive but is actually distracted. Both types of errors can affect the correctness of decisions. In future work, we can combine \algname with complementary techniques such as model learning \cite{ayoub2020model} and importance sampling \cite{LuoBai16} to address the remaining issues.} 

% \rvs{The main limitations of \algname are related to approximation errors. \algname can make faulty decisions when the model becomes critically wrong even within the short horizon, e.g., failing to correctly model the dynamics of heavy vehicles. Further limiting the search depth is a possible way to alleviate the problem. We can also combine \algname with model learning \cite{ayoub2020model} to improve the model using data.} 

% \rvs{Sampling error affects \algname similarly. If MC simulations fail to sample critical scenarios, such as an agent seeming to be attentive but is actually distracted, the planner can make wrong decisions. This can be addressed by combining \algname with importance sampling \cite{LuoBai16} to explicitly handle rare but critical events, thus improving safety.}

One limitation of LeTS-Drive is potential model errors. \clreinforce partially addresses the problem. It eliminates the bias in policy learning, but not in value learning, which still relies on self-supervision. Significant model errors may lead to inaccuracy in learned values and compromise the planner's  performance. Model learning, (\eg,  \cite{ayoub2020model}) alleviates this issue. It is also possible to apply reinforcement learning, \eg, temporal difference (TD) learning \cite{RLBookTD}, to learn the values directly, but it is undesirable,  because of sample inefficiency. 
Instead, we can refine value estimates through TD learning after ``warming up'' through self-supervision.

The current crowd-driving model in \algname can be further improved, by incorporating comprehensive traffic rules, social norms, \etc. 
With increased model complexity, we expect \algname to provide even more significant performance benefits through integrated planning and learning. 
There are also other models for different driving settings that are  interesting to consider \cite{ulbrich2013probabilistic,wray2017online,sunberg2020}.
More importantly, there is often a gap between the simulation and the real world for robot systems. Further research is required to study the effect of this gap on crucial issues, such as driving safety, as well as ways to close this gap~\cite{sim2real1,sim2real2,sim2real3}. \rvs{One may also perform human experiments in simulation to test the real-life performance of \algname, by letting human control some or all simulated exo-agents.}
% \rvs{Further, for successful real-life applications, we suggest to initialize the policy and value networks using real-world driving data,  before proceeding to closed-loop training in simulation.}

Finally, \algname's core algorithmic ideas are not specific to \task, but  applicable in general to many large-scale, long-term planning tasks, such as object manipulation in clutter, multi-agent coordination, \etc. 
We will explore these exciting directions as our next step. 

%===============================================================================
\newpage
\clearpage
%===============================================================================
\bibliographystyle{ieeetr}
\bibliography{main}
\appendix
% \section*{Appendix}
\subsection{Factored reward model} \label{sec:factored_reward}
The raw reward function described in \secref{sec:raw_reward} is sufficient for planning, but is particularly problematic for value learning due to the existence of rare but critical events, \eg, colliding with others. Particularly, this reward function is smooth at safe belief states, but can change dramatically at proximity to the critical events. 
To facilitate value learning, we factor our reward function, and consequently the value function, into safe-driving rewards $R_s$ and collision penalties $R_c$:
\begin{eqnarray}
    R &=& R_{s} + R_{c} \\
    R_{s} &=& R_{v} + R_{acc} + R_{change} \\
    R_{c} &=& R_{col}
\end{eqnarray} 
where the speed penalty $R_{v}$, smoothness penalties $R_{acc}$ and $R_{change}$, and collision penalty $R_{col}$ are defined as in \secref{sec:raw_reward}.  

To compute factored values from this reward function, we simply need to record the safe factor $\polvalue_s$ and collision factor $\polvalue_c$ separately during the backup process in the belief tree search. Particularly, at a belief node $b$, the Bellman's operator is executed as:
\begin{eqnarray}
% \fontsize{9}{10}
\act^*=\argmax_{\act \in A}\left\{ \rfun{\node}{\act}+\gamma
	\sum_{\obs \in \obsset_{\node,\act}}p(\obs|\node,\act)\polvalue(\newnode)\right\} \label{eqn:backup1} \\
\polvalue_{s}(b) = R_{s}(\node, \act^*) + \gamma
	\sum_{\obs \in \obsset_{\node,\act^*}}p(\obs|\node,\act^*)\polvalue_{s}(\newnode) \label{eqn:backup_safe} \\
\polvalue_{c}(b) = R_{c}(\node, \act^*) + \gamma
	\sum_{\obs \in \obsset_{\node,\act^*}}p(\obs|\node,\act^*)\polvalue_{c}(\newnode) \label{eqn:backup_col}
\end{eqnarray}
\noindent
Eqn. (\ref{eqn:backup1}) denotes the regular value backup process where the best value is chosen according to the original value estimates $\polvalue$. Then the factored values associated with this best action $a^*$ is backed-up to the parent (Eqn. (\ref{eqn:backup_safe}-\ref{eqn:backup_col})).

Factored values at the root node are extracted as supervision labels for the learner. As the two factors are frequently zero, we further decompose the extracted value labels to binary masks and non-zero values before feeding to the learner:
\begin{equation}
V = \mathds{1}_{|V_{s} \neq 0} * V_{s}^{-} + \mathds{1}_{|V_{c} \neq 0} * V_{c}^{-}
\end{equation}
\noindent
where $V_{s}^{-}$ and $V_{s}^{-}$ are non-zero, negative values.

\subsection{Loss functions for learners} \label{sec:losses}

\subsubsection{Supervision loss} \label{sec:spv_loss}
In self-supervised learners, the policy network $\pi_{\theta}$ and the value network $v_{\theta'}$ are trained separately using supervised learning using action, mask, and value labels output by the planner. Given a dataset $D$ of size $N$, the loss functions, $l(\theta, D)$ and $l(\theta', D)$, measure the errors in action and value predictions, respectively:
\begin{eqnarray} \label{equation::supervision_loss}
l(\theta, D) &=&  - \frac{1}{N}\sum_{i}^N \log \pi_{\theta}(\act^i | x_b^i) - \alpha H(\pi_{\theta}(\cdot|x_b^i)) \label{eqn:action_loss}\\
l(\theta', D) &= & l_{mask}(\theta', D) + l_{value}(\theta', D) 
\end{eqnarray}
\noindent
where
\begin{eqnarray} \label{equation::supervision_loss_1}
l_{mask}(\theta', D) &= & \frac{1}{N}\sum_{i}^N(m_{s}(x_b^i|\theta')-\mathds{1}_{|V_{s}^i \neq 0})^2 \label{eqn:mask_loss}\\\nonumber &+&
(m_{c}(x_b^i|\theta')-\mathds{1}_{|V_{c}^i \neq 0})^2 \\
l_{value}(\theta', D) &=& \frac{1}{N}\sum_{i}^N(\mathds{1}_{|V_{s}^i \neq 0} * v_{s}(x_b^i|\theta')-V_{s}^i)^2 \nonumber \\
&+& (\mathds{1}_{|V_{c}^i \neq 0} * v_{c}(x_b^i|\theta')-V_{c}^i)^2 \label{eqn:nonzero_loss}
\end{eqnarray}
Here, $x_b^i$ is the history state in the $i$th data point; $\act^i$, $V_{s}^i$, and $V_{c}^i$ are the action and value labels obtained from the planner; $m_{s}(x_b^i|\theta')$ and  $m_{c}(x_b^i|\theta')$ are the mask predictions from the value network; and $v_{s}(x_b^i|\theta')$ and $v_{c}(x_b^i|\theta')$ are the value predictions from the value network.

Eqn. (\ref{eqn:action_loss}) represents the cross-entropy loss \cite{CrossEntropy} of the output policy w.r.t. to action labels (the first term) \rvs{augmented with entropy regularization for the policy itself (the second term). The regularization factor $\alpha$ is tuned online using gradient descent to help maintain a given target entropy of the output policy.} This dynamic update rule of $\alpha$ is borrowed from SAC \cite{SAC}. In our implementation, we set the target entropy to be $0.98\log|A|$ (targeting at scattered distributions) initially, and gradually anneal it to $0.65\log|A|$ (targeting at more concentrated distributions). 
Eqn. (\ref{eqn:mask_loss}) defines the prediction loss of the binary masks applied on value factors. Finally, Eqn. (\ref{eqn:nonzero_loss}) defines the regression loss for the non-zero values. 

\subsubsection{Reinforcement loss} \label{sec:rnf_loss}
In the reinforcement learner, we use SAC \cite{SAC}, an off-policy policy-gradient algorithm, to train the policy network. Specifically, we use its discrete-action version presented in \cite{SAC-Discrete}. The loss function of the policy learner is:
\begin{equation}
J(\theta)=E_{x_b \sim D}\left[\pi_{\theta}\left(x_b\right)^{T}\left[\alpha \log \left(\pi_{\theta}\left(x_b\right)\right)-Q_{\phi}\left(x_b\right)\right]\right].
\end{equation}
\noindent
Here, $x_b$ is a sampled history state from the replay buffer;
$\pi_{\theta}$ is the policy network;  
\rvs{$\alpha$ is a dynamically-tuned regularization scalar controlling the target entropy of $\pi_{\theta}\left(x_b\right)$;}
and $Q_{\phi}$ is a Q-network trained in a soft-Q learning manner, serving as a differentiable surrogate objective.
The Q-network shares the same architecture as the policy network (\figref{fig:architectures}), but without the softmax applied to the output. Details of the discrete-action SAC can be found in \cite{SAC-Discrete}.

Note that for policy-gradient, we can not directly apply the reward function described in \secref{sec:raw_reward} because of the scale and sparsity of collision penalties. Instead, we use the following smooth reward function in SAC:
\begin{equation}
    R = 0.05\frac{v}{v_{max}} -0.025\mathds{1}_{lane\neq 0} - \frac{1}{9t_{c}^2}
\end{equation}
\noindent
where the first term encourages efficient driving, the second penalizes excessive lane changes, and the third term penalizes proximity to collision events according to the time-to-collision, $t_{c}$, estimated using a constant-velocity prediction model.

\end{document}